\def\blfootnote{\xdef\@thefnmark{}\@footnotetext}
\renewcommand{\Re}{\mathbb{R}}
\newcommand{\Nint}{\mathcal N_{\text{int}}}
\newcommand{\T}{\mathcal T}
\newcommand{\N}{\mathcal N}
\newcommand{\C}{\mathcal C}
\newcommand{\W}{\mathcal W}
\newcommand{\Q}{\mathcal Q}
\DeclareMathOperator*{\argmax}{arg\,max}
\DeclareMathOperator*{\argmin}{arg\,min}
\renewcommand{\emptyset}{\varnothing}
\theoremstyle{definition}
\newtheorem{theorem}{Theorem}[section]
\newtheorem{lemma}[theorem]{Lemma}
\newtheorem{fact}[theorem]{Fact}
\newtheorem{proposition}[theorem]{Proposition}
\newtheorem{corollary}[theorem]{Corollary}
\newtheorem{definition}[theorem]{Definition}
\newtheorem{problem}{Problem}
\begin{document}

\title{Information-Theoretic Abstractions for Planning in Agents with Computational Constraints\\
\blfootnote{Support for this work has been provided by ONR awards N00014-18-1-2375 and N00014-18-1-2828
and by ARL under DCIST CRA W911NF-17-2-018.}
}

\author{Daniel T. Larsson\thanks{D. Larsson is a PhD student with the Guggenheim School of Aerospace Engineering, Georgia Institute of Technology, Atlanta,
		GA, 30332-0150, USA. Email:
		{\small daniel.larsson@gatech.edu}}
		\and
		Dipankar Maity\thanks{D. Maity is an Assistant Professor in the Department of Electrical and Computer Engineering at the University of North Carolina, Charlotte,
NC, 28223, USA. Email:
		{\small dmaity@uncc.edu}}
		\and
		Panagiotis Tsiotras\thanks{P. Tsiotras is the Andrew and Lewis Chair Professor with the Guggenheim School of Aerospace Engineering and the Institute for Robotics and Intelligent Machines, Georgia Institute of Technology, Atlanta,
		GA, 30332-0150, USA. Email:
		{\small tsiotras@gatech.edu}}
}
\date{}

\maketitle

\begin{abstract}
In this paper, we develop a framework for path-planning on abstractions that are not provided to the agent a priori but instead emerge as a function of the available computational resources.
We show how a path-planning problem in an environment can be systematically approximated by solving a sequence of easier to solve problems on abstractions of the original space.
The properties of the problem are analyzed, and a number of theoretical results are presented and discussed.
A numerical example is presented to show the utility of the approach and to corroborate the theoretical findings.  
We conclude by providing a discussion detailing the connections of the proposed approach to anytime algorithms and bounded rationality.
\end{abstract}

\section{INTRODUCTION}

Path and motion planning for autonomous systems has long been an area of research within the robotics and artificial intelligence communities.
This has led to the development of a number of frameworks which formulate planning tasks in terms of mathematical optimization problems, which can then be solved by utilizing approaches from optimization and optimal control \cite{Bertsekas2012,Sutton2016}. 
However, planning in complex domains can be a challenging problem, and requires the agents to spend time and computational resources in order to find solutions, giving rise to an intrinsic need for agents to balance computational complexity with optimality of the resulting plan \cite{Dean1988,Kambhampati1986,Tsiotras2012,Behnke2004,Lu2012}.

Consequently, a number of approaches within the path-planning community have been developed that aim to explicitly capture the interplay between complexity and optimality.
For example, in \cite{Tsiotras2007,Tsiotras2012,Cowlagi2008,Cowlagi2010,Cowlagi2011,Cowlagi2012,Cowlagi2012a}, the authors utilize wavelets to obtain multi-resolution representations of two-dimensional environments for path-planning.
The use of abstractions, or aggregations, of the planning space to form multi-resolution environment depictions allows these works to leverage the computational benefits of executing graph-search algorithms, such as A\(^*\), on reduced graphs of the environment that contain fewer vertices as compared to the original, full-resolution, representation.

In a similar spirit, other works, such as \cite{Kambhampati1986,Hauer2015,Hauer2016}, consider abstractions for planning, but instead utilize hierarchical representations of the world in the form of multi-resolution quadtrees and octrees.
Interestingly, the use of tree structures enables these works to incorporate environment uncertainty \cite{Kraetzschmar2004}.
With this added flexibility, these approaches can be used in an on-line manner, allowing autonomous agents to plan based on occupancy grid (OG) representations of the world that are dynamically updated as the agent interacts with the environment. 
To strike a balance between the complexity of the search and satisfactory performance, the aforementioned works recursively re-solve the planning problem as the agent traverses the world.

It should be noted that the interplay between complexity and optimality is not unique to the path-planning community.
Recent work related to bounded-rational decision making has illustrated a growing need to develop decision-making frameworks for agents that are resource limited \cite{Tishby2010,Ortega2011,Genewein2015,Larsson2017,Rubin2012}.
This area of research considers limitations in the traditional assumptions of artificial intelligence, and approaches problems by viewing agents as resource-limited entities that are constrained in terms of their information-processing capabilities. 
To model such agents, the authors in \cite{Genewein2015} utilize concepts from information theory, arguing that bounded-rational decision making can be modeled by considering Kullback-Leibler (KL) divergence constraints added to traditional maximum expected utility problems. 
Extensions of this work to sequential decision-making problems in stochastic domains is considered in \cite{Tishby2010,Rubin2012}, whereby Markov Decision Processes (MDPs) are utilized with information-theoretic constraints to formulate information-limited MDPs (IL-MDPs). 
These frameworks include a trade-off parameter that balances the optimality of the decision policy and the effort required to obtain it, as measured by a KL-divergence measure between the resulting posterior policy and a default prior policy.
These approaches offer one perspective of bounded-rational decision making and provide for interesting connections with information-theoretic frameworks for compression, such as rate-distortion theory \cite{Tishby2010,Genewein2015}.

In this paper, we consider complexity reduction in path-planning problems by means of graph abstractions for resource-limited agents.
Our approach combines aspects from both the planning and bounded-rational decision-making communities.
The contribution of this paper is two-fold.
Firstly, we employ an information-theoretic approach for the generation of multi-resolution abstractions that are a function of a single trade-off parameter and are not provided a priori for the purposes of path-planning and secondly, our framework couples the environment resolution to the resulting path quality.
To the best of our knowledge, there are no existing approaches that utilize information-theoretic abstractions for complexity reduction in path-planning that also guarantee the monotonic improvement of the path-cost as a function of environment resolution. 
The ability to couple path-cost with environment resolution allows us to facilitate connections between the path quality, the complexity of executing graph-search algorithms to obtain resolution-optimal paths and the information-processing capabilities of the agent as determined by the information-theoretically generated abstractions.
In summary, our framework (i) captures the cost of abstraction by defining an abstract cost-function, (ii) utilizes concepts from information theory to obtain reduced environment representations as a function of agent information-processing capabilities, and (iii) provides provable guarantees on the monotonic improvement of the path cost as a function of environment resolution.
The paper is organized as follows.
We begin in Section \ref{sec:prelims} by introducing necessary background material prior to formalizing our problem in Section \ref{sec:problemFormul}.
Then, in Section \ref{sec:ITIB_Abstractions}, we describe the information-theoretic framework for multi-resolution environment abstractions before discussing planning on abstractions in Section \ref{sec:planningOnAbstractions}.
We then present a numerical example in Section \ref{sec:numericalExample} with accompanying discussion in Section \ref{sec:discussion} and provide our concluding remarks in Section \ref{sec:conclusion}.
Proofs for the papers theoretical results can be found in the appendices.

\section{PRELIMINARIES} \label{sec:prelims}

Denote the set of real and non-negative real numbers by \(\Re\), \(\Re_{+}\), respectively, and, for any positive integer \(d\), let \(\Re^d\) denote the \(d\)-dimensional Euclidean space.
Assume that the environment \(\mathcal W \subset \Re^d\) is given by a \(d\)-dimensional occupancy grid and that there exists an integer \(\ell > 0\) such that the environment is contained within a hypercube of side length \(2^\ell\). 
The environment is represented as a tree \(\mathcal T = \left( \mathcal N(\mathcal T), \mathcal E(\mathcal T) \right)\), where the edge set \(\mathcal E (\mathcal T)\) describes the relationship between the nodes in \(\mathcal N(\mathcal T)\).
In this paper, we restrict our attention to the case where the tree representation is that of a quadtree, however the contributions of this paper are valid for any tree structure.
We let \(\mathcal T^{\mathcal Q}\) be the space of all feasible  quadtree representations of \(\mathcal W\), where each \(\mathcal T \in \mathcal T^{\mathcal Q}\) encodes a multi-resolution, hierarchical, representation of the world.
Take \(\mathcal T_{\mathcal W} \in \mathcal T^{\mathcal Q}\) be the quadtree corresponding to the original environment \(\mathcal W\); that is, \(\mathcal T_{\mathcal W}\) encodes the finest resolution depiction of \(\mathcal W\).

Consider any node \(n \in \mathcal N(\mathcal T_{\mathcal W})\) at depth \(k \in \left\{0,\ldots,\ell \right\}\), then \(n' \in \mathcal N(\mathcal T_{\mathcal W})\) is a child of \(n\) if the following hold:
\begin{enumerate}
		\item 
		Node \(n'\) is at depth \(k+1\) in \(\mathcal T_{\mathcal W}\),
		
		\item
		Nodes \(n\) and \(n'\) are incident to a common edge, i.e., \(\left(n,n'\right) \in \mathcal E \left( \mathcal T_{\mathcal W} \right)\).
\end{enumerate}
In the sequel, we let the set of child nodes for any \(n \in \mathcal N(\mathcal T_{\mathcal W})\) be denoted by \(\mathcal C(n)\) and \(\mathcal N_k(\mathcal T_{\mathcal W})\) to be the set of nodes at depth \(k\).
For any \(\mathcal T \in \mathcal T^{\mathcal Q}\) we take \(\mathcal N_{\text{leaf}}\left( \mathcal T \right) = \left\{n' \in \mathcal N(\mathcal T) :  \mathcal C(n') \cap \mathcal N(\mathcal T) = \emptyset \right\}\) to denote the set of leaf nodes and \(\Nint(\T) = \N(\T) \setminus \N_{\text{leaf}}(\T)\) to be the set of interior nodes of the tree \(\mathcal T\).

While useful for describing the relationship between nodes in a given tree, the aforementioned sets do not describe how the nodes in the tree \( \mathcal T \in \mathcal T^{\mathcal Q}\) are related to the spatial region described by the environment \(\mathcal W\).
In order to make these connections precise, we have the following definition.
\begin{definition}[\hspace{-0.5pt}\cite{Hauer2015}] \label{def:nodeHyperCube}
	Let \(k \in \{0,\ldots,\ell\}\) and \(n \in \mathcal N_{k}(\mathcal T_{\mathcal W})\).  
	Then the node \(n\):
	\begin{enumerate}
		\item
		Is at depth \(k\) and has an \(r\)-value given by the function \(r:\mathcal N(\T_\W) \to \{0,\ldots,\ell\}\) defined by the rule \(r (n) = \ell - k\). 
		The inverse image of the function \(r\) is the set \(r^{-1}(L) = \{n \in \N(\T_\W) : r(n) \in L\}\) for any \(L \subseteq \{0,\ldots,\ell\}\).

		\item 
		Represents a hypercube \(H(n) \subseteq \mathcal W\) with side length \(2^{r(n)}\) and volume \(2^{d r(n)}\) centered at the point \(\mathbf p(n) \in \Re^d\).
			
		\item 
		The hypercubes corresponding to the nodes that are the children of \(n\) form a partition of \(H(n)\).  
		That is,
		\begin{equation*}
			H(n) = \bigcup_{n' \in \mathcal C(n)} H(n').
		\end{equation*}
	\end{enumerate}
\end{definition}

\begin{figure}[t]
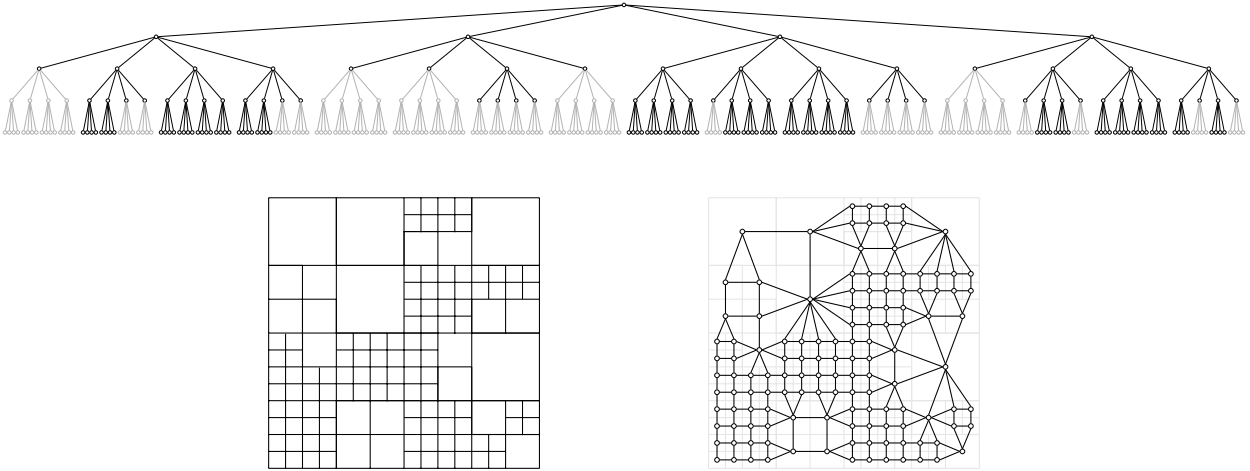

		\centering
		\begin{adjustbox}{max size={\textwidth}}

	\end{adjustbox}
		\caption{Tree representation (top) of some \(\mathcal T \in \mathcal T^{\mathcal Q}\), corresponding grid depiction (left) and associated graph (right) for a \(2^{\ell} \times 2^{\ell}\) with \(\ell = 4\) environment.  The connectivity of the graph is consistent with the definition of nodal neighbor.  The nodes in \(\mathcal T_{\mathcal W}\) that are not in \(\mathcal T\) are shown in grey.}
		\label{fig:gridAndQtreeAbstraction}
\end{figure}

In order to utilize the tree \(\mathcal T \in \mathcal T^{\mathcal Q}\) for planning, we must specify how the nodes in the tree \(\mathcal T\) are connected.
To this end, we consider the nodes \(n,\hat n \in \mathcal N\left( \mathcal T_{\mathcal W} \right)\) as \emph{nodal neighbors} if the following statements hold:
	\begin{enumerate}
		\item 
		\(\lVert \mathbf{p}(n) - \mathbf{p}(\hat n) \rVert_{\infty} = 2^{r(n) - 1} + 2^{r(\hat n)  - 1}\),
		
		\item 
		There exists a unique \(i \in \left\{1,\ldots,d \right\}\) such that \(\lvert \left[\mathbf{p}(n) - \mathbf{p}(\hat n)\right]_i\rvert = 2^{r(n) - 1} + 2^{r(\hat n)  - 1}\),
	\end{enumerate}
	where \(\left[ \mathbf{p}(n) - \mathbf{p}(\hat n) \right]_i\) denotes the \(i^{\text{th}}\) entry of the vector \(\mathbf{p}(n) - \mathbf{p}(\hat n)\) and \(|\cdot|\) denotes the absolute value.
Then, for each tree \(\mathcal T \in \mathcal T^{\mathcal Q}\) there exists an associated graph \(\mathcal G(\mathcal T) = \left(\mathcal V(\mathcal T), E(\mathcal T) \right)\), constructed from the leaf nodes of \(\mathcal T\), consisting of a set of vertices \(\mathcal V(\mathcal T)\) and edges \(E(\mathcal T)\), where the set \(E(\mathcal T)\) describes the connectivity of the vertices in \(\mathcal V (\mathcal T)\).
To describe the relation between \(\mathcal V(\mathcal T)\) and \(\mathcal N_{\text{leaf}}\left(\mathcal T\right)\), we define the mapping \(\texttt{Node}_{\mathcal G(\mathcal T)}:\mathcal V(\mathcal T) \to \mathcal N(\mathcal T_{\mathcal W})\) such that if \(n_v \triangleq \texttt{Node}_{\mathcal G(\mathcal T)}(v)\), then the vertex \(v \in \mathcal V(\mathcal T)\) corresponds to the node \(n_v \in \mathcal N(\mathcal T_{\mathcal W})\)\footnote{The mapping \(\texttt{Node}_{\mathcal G(\mathcal T)}\) has co-domain \(\mathcal N(\mathcal T_{\mathcal W})\) since the set \(\mathcal N(\mathcal T_{\mathcal W})\) contains all nodes of any tree \(\mathcal T \in \mathcal T^{\mathcal Q}\).}.
Thus, for any two vertices \(v,\hat v \in \mathcal V(\mathcal T)\), \((v,\hat v) \in E(\mathcal T)\) if and only if the nodes \(n_v,n_{\hat v} \in \mathcal N_{\text{leaf}}(\mathcal T) \subseteq \mathcal N(\mathcal T_{\mathcal W})\) are nodal neighbors.
A visual depiction of this relation is provided in Figure \ref{fig:gridAndQtreeAbstraction}. 
Note that the above process describes how a graph \(\mathcal G(\mathcal T)\) can be constructed when given any tree \(\mathcal T \in \mathcal T^{\mathcal Q}\).

Before discussing the application to path-planning, we require the formalism of a probability space and its relation to the tree \(\mathcal T_{\mathcal W}\).
The need for this stems from the information-theoretic framework utilized for compression and the technical considerations pertaining to the probibalistic encoding of obstacles by the OG.
To this end, let \(\left( \Omega, \mathcal F, \mathbb P\right)\) be a probability space with finite sample space \(\Omega\), \(\sigma\)-algebra \(\mathcal F\), and probability measure \(\mathbb P:\mathcal F \to [0,1]\).
Define random variables \(X:\Omega \to \mathcal N_{\text{leaf}}\left( \mathcal T_{\mathcal W} \right)\) and \(Y:\Omega \to \{0,1\}\), with associated distributions \(p(x) = \mathbb{P}\left( \left\{\omega \in \Omega: X(\omega) = x \right\} \right)\) and \(p(y) = \mathbb{P}\left( \left\{\omega \in \Omega: Y(\omega) = y \right\} \right)\).
The random variables \(X\) and \(Y\) can then be viewed as representing each of the unit hypercubes of \(\mathcal W\) and the total cell occupancy, respectively, where for \(y \in \Omega_Y = \{0,1\}\), we let \(y = 1\) represent the outcome of ``occupied'' and \(y = 0\) correspond to the outcome of ``empty''.
The OG representation of \(\mathcal W\) then provides us with the conditional distribution \(p(y=1 | x)\) for all \(x \in \Omega_X\), where \(p(y=1 | x)\) is the probability that the cell \(x \in \Omega_X\) is occupied.

\vspace{0.4cm}

\section{PROBLEM FORMULATION} \label{sec:problemFormul}
The problem we are interested in is defined as follows.

\begin{problem} \label{prob:frp_approx}
Given the tree \(\mathcal T_{\mathcal W}\), a scalar \(\varepsilon \in [0,1]\), constants \(\lambda_1 \in (0,1], \lambda_2 \in [0,1]\) with \(\lambda = (\lambda_1,\lambda_2)\), a start node \(s_0 \in \mathcal N_{\text{leaf}}(\mathcal T_{\mathcal W})\) and a goal node \(s_g \in \mathcal N_{\text{leaf}}(\mathcal T_{\mathcal W})\), we consider the problem of obtaining a \emph{finest-resolution path} (FRP) \(\pi = \left\{x_0,\ldots,x_K \right\} \subseteq \mathcal N_{\text{leaf}}(\mathcal T_{\mathcal W})\) where \(x_0 = s_0\), \(x_K = s_g\), each \(x \in \pi\) is distinct, and \(x_i,x_{i+1} \in \pi\) are nodal neighbors for all \(i \in \{0,\ldots,K-1\}\), so as to satisfy 
\begin{equation} \label{eq:FRPpathProblem}
\pi^* \in \argmin_{\pi \in \Pi} J^{\lambda}_{\varepsilon}(\pi),
\end{equation}
where 
\begin{equation}\label{eq:FRPpathCost}
J^{\lambda}_{\varepsilon}(\pi) = \sum_{x \in \pi} c^{\lambda}_{\varepsilon}(x),
\end{equation}
and 
\begin{equation}\label{eq:FRPcost}
c^{\lambda}_{\varepsilon}(x) = \begin{cases}
\lambda_1 + \lambda_2 p(y = 1 |x), & \text{if} ~ x \in \mathcal P_{\varepsilon},\\
M_{\varepsilon}^{\lambda}, & \text{if} ~ x \in \mathcal{N}_{\text{leaf}}(\mathcal T_{\mathcal W}) \setminus \mathcal P_{\varepsilon},
\end{cases}
\end{equation}
with \(M_{\varepsilon}^{\lambda} = 2^{d\ell}(\lambda_1 + \varepsilon \lambda_2) + \gamma\) for any \(\gamma > 0\),\footnote{Strictly speaking, \(\gamma > 0\) may be any positive number.  However, we let \(\gamma = 2\) in this paper.} \(\mathcal P_{\varepsilon} = \left\{ x \in \Omega_X: p( y= 1 | x)\leq \varepsilon \right\}\) and where \(\Pi\) denotes the set of FRPs leading from the start node \(s_0\) to the goal \(s_g\) in the tree \(\mathcal T_{\mathcal W}\).
We aim to approximate \eqref{eq:FRPpathProblem} to various degrees of fidelity by leveraging environment abstractions that can be tailored to agent resource constraints so as to reduce the computational complexity of the planning problem.
\hfill \(\triangle\) 
\end{problem}

We call an FRP \(\pi\) for which \(\pi \subseteq \mathcal P_{\varepsilon}\) an \emph{\(\varepsilon\)-feasible} FRP.
The role of \(\varepsilon\) is to define a feasible cell when the obstacle information is encoded probibalistically, and \(M^{\lambda}_{\varepsilon}\) is a factor that penalizes nodes that are considered to be obstacles so as to ensure search algorithms do not include them as part of an FRP unless no feasible paths exist. 
This is important, since nodes \(x \in \mathcal P_{\varepsilon}^{c}\) are \emph{not} removed from the search, ensuring that the right hand side of \eqref{eq:FRPpathProblem} is non-empty.
The choice of cost function \eqref{eq:FRPcost} is inspired by previous works within the robotics community that have considered planning on multi-scale abstractions \cite{Hauer2015}.
The work in this paper is distinct from existing works in that we (i) utilize a principled, information theoretic, framework to generate abstractions not provided a priori as a function of a single trade-off parameter, and (ii) provide theoretical results that couple environment resolution and path cost.

The resulting search problem on the graph \(\mathcal G(\mathcal T_{\mathcal W})\) may be computationally expensive.
However, notice that by changing the leaf nodes of the tree \(\mathcal T \in \mathcal T^{\mathcal Q}\), we alter the graph representation \(\mathcal G(\mathcal T)\) and thereby influence the number of nodes and the complexity of the resulting graph-search.
Thus, instead of solving \eqref{eq:FRPpathProblem} directly on \(\mathcal G(\mathcal T_{\mathcal W})\), we propose to approximate \eqref{eq:FRPpathProblem} by a computationally easier-to-solve problem on a graph \(\mathcal G(\mathcal T)\) for some \(\mathcal T \in \mathcal T^{\mathcal Q}\).
The challenge is then to determine how to select the tree \(\mathcal T \in \mathcal T^{\mathcal Q}\) as a function of agent resource constraints.

\section{SOLUTION APPROACH}
In this section, we review an information-theoretic framework from \cite{Larsson2020} that is utilized to generate environment abstractions.
It should be noted that while \cite{Larsson2020} presents a method for the generation of abstractions utilized in this paper, that work does not address the use of abstractions for the purposes of path-planning. 
After a brief review of the abstraction-generating process, we discuss path-planning on the resulting abstractions and show how these solutions can be used as approximations to Problem \ref{prob:frp_approx}.

\subsection{Information-Theoretic Tree Selection} \label{sec:ITIB_Abstractions}
Information theory provides a number of frameworks to construct encoders that compress arbitrary signals \cite{Cover2006,Tishby1999,Slonim2000}.
It is well known that the mutual information between a compressed representation \(Z\) of \(X\), given by 
\begin{equation}
	I(Z;X) \triangleq \sum_{z,x} p(z,x) \log\frac{p(z,x)}{p(z)p(x)},
\end{equation}
measures the amount of compression between the random variables \(X\) and \(Z\) \cite{Cover2006}.
Smaller values of the non-negative quantity \(I(Z;X)\) imply that \(Z\) is a more compressed representation of the original signal \(X\) as compared to those with larger values of \(I(Z;X)\).
However, maximizing compression via the minimization of \(I(Z;X)\) is not a well-posed problem since \(I(Z;X) = 0\) is always attainable.
Instead, one must constrain the problem by a measure that captures how good of a compressed representation \(Z\) is of \(X\).

One particular method of interest is the \emph{information bottleneck} (IB), which defines a good abstraction by the amount of information retained in the compressed representation regarding a third, relevant, random variable \cite{Tishby1999}.
More precisely, the IB method considers the problem
\begin{equation} \label{eq:IBproblemGeneral}
	p^*(z|x) = \argmax_{p(z|x)} I(Z;Y) - \frac{1}{\beta} I(Z;X),
\end{equation}
where \(X\), \(Y\), \(Z\) are random variables corresponding to the original signal, relevant variable and compressed signal, respectively, \(I(Z;Y)\) is the amount of relevant information retained in the compressed representation and \(p(z|x)\) is an encoder, mapping outcomes of \(X\) to outcomes of \(Z\).
The trade-off parameter \(\beta > 0\) balances the amount of relevant information retained in the compressed representation vs. the amount of compression of the original signal.
For discrete random variables and assuming: (i) the Markov chain \(Z \leftrightarrow X \leftrightarrow Y\), encoding the fact that \(Z\) cannot be more informative regarding \(Y\) than \(X\), and (ii) the joint distribution \(p(x,y)\) is provided, a local solution to \eqref{eq:IBproblemGeneral} can be obtained by an algorithm that likens the Blahut-Arimoto algorithm from rate-distortion theory \cite{Tishby1999}.

However, the resulting encoder \(p^*(z|x)\) is generally stochastic (i.e., \(x\) may have partial membership to multiple \(z\)), and is not guaranteed to represent a tree representation of the environment.
Consequently, direct application of traditional algorithms to solve the problem \eqref{eq:IBproblemGeneral} is not possible.
Constraining the IB problem to the space of multi-resolution tree representations of \(\mathcal W\) presents a significant challenge, and has only recently been investigated \cite{Larsson2020}.

\begin{figure}[t]
	\centering
	\begin{adjustbox}{max size={0.4\textwidth}}
		\begin{tikzpicture}[level distance=1.2cm,
		level 1/.style={sibling distance=3cm},
		level 2/.style={sibling distance=0.7cm}]
		
		\node[shape = circle, draw, line width = 1pt, minimum size = 2.5mm, inner sep = 0mm] at (0,0) {};        
		\node at (7,0) {\Large \(\mathcal T_1\) \normalsize};
		
		\node (A) at (0, -0.5) {};
		\node (B) at (0, -1.5) {};
		
		\node (A2) at (7, -0.5) {};
		\node (B2) at (7, -1.5) {};
		
		\draw [->, line width=0.5mm, black] (A) -- (B);
		\draw [->, line width=0.5mm, black] (A2) -- (B2);
		
		\node at (7,-2) {\Large \(\mathcal T_2\) \normalsize};
		
		\node[shape = circle, draw, line width = 1pt, minimum size = 2.5mm, inner sep = 0mm] at (0,-2) {}
		child {node[shape = circle, draw, line width = 1pt, minimum size = 2.5mm, inner sep = 0mm] {}
		}
		child {node[shape = circle, draw, line width = 1pt, minimum size = 2.5mm, inner sep = 0mm] {}
		}
		child {node[shape = circle, draw, line width = 1pt, minimum size = 2.5mm, inner sep = 0mm] {}
		}
		child {node[shape = circle, draw, line width = 1pt, minimum size = 2.5mm, inner sep = 0mm] {}
		};
		
		\node (C) at (0, -2.5) {};
		\node (D) at (0, -3.5) {};
		
		\node (C2) at (7, -2.5) {};
		\node (D2) at (7, -3.5) {};
		
		\draw [->, line width=0.5mm, black] (C) -- (D);
		\draw [->, line width=0.5mm, black] (C2) -- (D2);
		
		\node at (7,-4) {\Large \(\mathcal T_3\) \normalsize};
		
		\node[shape = circle, draw, line width = 1pt, minimum size = 2.5mm, inner sep = 0mm] at (0,-4) {}
		child {node[shape = circle, draw, line width = 1pt, minimum size = 2.5mm, inner sep = 0mm] {}
			child {node[shape = circle, draw, line width = 1pt, minimum size = 2.5mm, inner sep = 0mm] {}}
			child {node[shape = circle, draw, line width = 1pt, minimum size = 2.5mm, inner sep = 0mm] {}}
			child {node[shape = circle, draw, line width = 1pt, minimum size = 2.5mm, inner sep = 0mm] {}}
			child {node[shape = circle, draw, line width = 1pt, minimum size = 2.5mm, inner sep = 0mm] {}}
		}
		child {node[shape = circle, draw, line width = 1pt, minimum size = 2.5mm, inner sep = 0mm] {}
		}
		child {node[shape = circle, draw, line width = 1pt, minimum size = 2.5mm, inner sep = 0mm] {}
		}
		child {node[shape = circle, draw, line width = 1pt, minimum size = 2.5mm, inner sep = 0mm] {}
		};
		
		\node (E) at (0, -4.5) {};
		\node (F) at (0, -5.5) {};
		
		\node (E2) at (7, -4.5) {};
		\node (F2) at (7, -5.5) {};
		
		\draw [->, line width=0.5mm, black] (E) -- (F);
		\draw [->, line width=0.5mm, black] (E2) -- (F2);
		
		\node at (7,-6) {\Large \(\mathcal T_4\) \normalsize};
		
		\node[shape = circle, draw, line width = 1pt, minimum size = 2.5mm, inner sep = 0mm] at (0, -6){}
		child {node[shape = circle, draw, line width = 1pt, minimum size = 2.5mm, inner sep = 0mm] {}
			child {node[shape = circle, draw, line width = 1pt, minimum size = 2.5mm, inner sep = 0mm] {}}
			child {node[shape = circle, draw, line width = 1pt, minimum size = 2.5mm, inner sep = 0mm] {}}
			child {node[shape = circle, draw, line width = 1pt, minimum size = 2.5mm, inner sep = 0mm] {}}
			child {node[shape = circle, draw, line width = 1pt, minimum size = 2.5mm, inner sep = 0mm] {}}
		}
		child {node[shape = circle, draw, line width = 1pt, minimum size = 2.5mm, inner sep = 0mm] {}
		}
		child {node[shape = circle, draw, line width = 1pt, minimum size = 2.5mm, inner sep = 0mm] {}
		}
		child {node[shape = circle, draw, line width = 1pt, minimum size = 2.5mm, inner sep = 0mm] {}
			child {node[shape = circle, draw, line width = 1pt, minimum size = 2.5mm, inner sep = 0mm] {}}
			child {node[shape = circle, draw, line width = 1pt, minimum size = 2.5mm, inner sep = 0mm] {}}
			child {node[shape = circle, draw, line width = 1pt, minimum size = 2.5mm, inner sep = 0mm] {}}
			child {node[shape = circle, draw, line width = 1pt, minimum size = 2.5mm, inner sep = 0mm] {}}
		};
		
		\end{tikzpicture}
	\end{adjustbox}
	\caption{Sequence of trees \(\left\{ \mathcal T_i \right\}_{i=1}^{4} \subseteq \mathcal T^{\mathcal Q}\) leading from \(\mathcal T_1 = \texttt{Root}(\mathcal T_{\mathcal W})\) to \(\mathcal T_4\).
		Note that \(\mathcal T_1 = \texttt{Root}\left( \mathcal T_{\mathcal W}\right)\) is the root node of \(\mathcal T_{\mathcal W}\) and \(\mathcal N\left( \mathcal T_{i+1}\right) \setminus \mathcal N\left( \mathcal T_{i}\right) = \mathcal C(n)\) for some \(n \in \mathcal N_{\text{leaf}}\left( \mathcal T_i \right)\) holds all \(i \in \{1,2,3\}\).}
	\label{fig:sequenceOfTrees}
\end{figure}
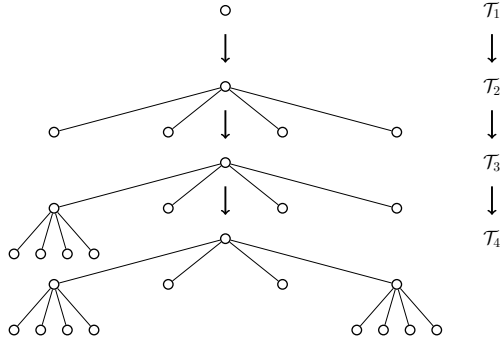

A key observation in formulating the problem \eqref{eq:IBproblemGeneral} over the space of multi-resolution trees is that each tree \(\mathcal T_q \in \mathcal T^{\mathcal Q}\) can be represented by a corresponding encoder of the form \(p^q(z|x)\), where \(p^q(z|x)\) specifies how the leaf nodes \(x \in \mathcal N_{\text{leaf}}\left(\mathcal T_{\mathcal W} \right)\) are mapped to nodes \(z \in \mathcal N_{\text{leaf}}\left( \mathcal T_q \right)\) to create the tree \( \mathcal T_q\) \cite{Larsson2020}. 
Thus, one may define the IB problem over the space of trees as 
\begin{equation}\label{eq:IBtreeProblem}
\mathcal T_{q^*} \in \argmax_{\mathcal T_q \in \mathcal T^{\mathcal Q}} L_Y(\mathcal T_q;\beta),
\end{equation}
where 
\begin{equation}
L_Y(\mathcal T_q;\beta) = K_Y(p^q(z|x);\beta),
\end{equation}
and 
\begin{equation} \label{eq:IBlagrangian}
K_Y(p(z|x);\beta) = I(Z;Y) - \frac{1}{\beta}I(Z;X).
\end{equation}
The problem \eqref{eq:IBtreeProblem} is difficult to solve for large world environments since the space of feasible multi-resolution trees can be large, making grid-search methods that exhaustively enumerate candidate solutions computationally intractable.

However, note that any tree \(\mathcal T_q \in \mathcal T^{\mathcal Q}\) can be obtained by starting at the tree \(\texttt{Root}(\mathcal T_\mathcal W) \in \mathcal T^{\mathcal Q}\) and performing a sequence of nodal expansions, as illustrated in Figure \ref{fig:sequenceOfTrees} \cite{Larsson2020}.
To capture this incremental change in \eqref{eq:IBlagrangian}, we define
\begin{equation}
\Delta L_Y(\mathcal T_i,\mathcal T_{i+1};\beta) = K_Y(p^{i+1}(z|x);\beta) - K_Y(p^i(z|x);\beta),
\end{equation}
which can be shown to only be a function of those nodes in \(\mathcal T_{i+1}\) that are merged to create the tree \(\mathcal T_i\) \cite{Larsson2020}.

Consequently, one can write the IB problem over the space of trees as 
\begin{equation} \label{eq:sequenicalIBLagragian}
\max_{m} \max_{\left\{\mathcal T_1,\ldots,\mathcal T_m \right\}} L_Y(\mathcal T_0;\beta) + \sum_{i=0}^{m-1} \Delta L_Y(\mathcal T_{i},\mathcal T_{i+1};\beta),
\end{equation}
where \(\mathcal T_0 \in \mathcal T^{\mathcal Q}\) is a given tree at which we start the nodal expansion process.
The tree \(\mathcal T_0\) is generally selected to be the tree \(\texttt{Root}(\mathcal T_{\mathcal W})\) so as to ensure that all trees in the space \(\mathcal T^{\mathcal Q}\) are obtainable via nodal expansion.
By changing the value of \(\beta > 0\), we alter the tree \(\mathcal T \in \mathcal T^{\mathcal Q}\), and thereby the corresponding multi-resolution representation of \(\mathcal W\), that emerges as a solution to \eqref{eq:sequenicalIBLagragian}.
Specifically, as \(\beta \to \infty\) we recover a tree \(\T \in \T^{\Q}\) that retains all the relevant information in \(\T_\W\) and as \(\beta \to 0\), we obtain the tree that maximizes compression (the root node), with a spectrum of solutions obtained for intermediate values of \(\beta > 0\).

Since the incremental change \(\Delta L_Y(\T_{i},\T_{i+1};\beta)\) does not depend on the entire configuration of the tree, one can compute the change in cost of expanding the node \(n \in \Nint(\T_\W)\), thereby removing \(n\) as a leaf node of the tree \(\T_i\) and adding \(\C(n)\) as leafs to create the tree \(\T_{i+1}\), according to 
\begin{equation} \label{eq:node-wiseDeltaL}
\Delta \tilde L_Y(n;\beta) = \Delta I_Y(n) - \frac{1}{\beta}\Delta I_X(n).
\end{equation}
In this relation, \(\Delta I_Y(\cdot)\) and \(\Delta I_X(\cdot)\) are non-negative functions that capture \(I(Z_{i+1};Y) - I(Z_i;Y)\) and \(I(Z_{i+1};X) - I(Z_i;X)\), respectively, where \(Z_{i}\) represents the compressed random variable encoded by the tree \(\T_{i}\), with \(Z_{i+1}\) defined analogously.
Through direct calculation, the functions \(\Delta I_Y(\cdot)\) and \(\Delta I_X(\cdot)\) can be shown to be dependent on the joint distribution \(p(x,y)\) through the Jensen-Shannon divergence measure and Shannon entropy, respectively \cite{Larsson2020,Slonim2000}.

The node-wise property in \eqref{eq:node-wiseDeltaL} is utilized by algorithms to solve the problem \eqref{eq:sequenicalIBLagragian}.
For example, a greedy approach inspects \(\Delta \tilde L_Y(n;\beta)\) in the set \(n \in \N_{\text{leaf}}(\T_i)\), and terminates if \(\Delta \tilde L_Y(n;\beta) \leq 0\) for all \(n \in \N_{\text{leaf}}(\T_{i})\) \cite{Larsson2020}.
While simple, such an approach does not generally find optimal solutions.
To overcome this, the authors of \cite{Larsson2020} define a function such that if \(n \in \Nint(\T_\W) \), then
\begin{equation}\label{eq:nodeWiseQfunction1}
\tilde Q_Y(n;\beta) = \max \{ \Delta \tilde L_Y(n;\beta) +  \sum_{n' \in \C(n)} \tilde Q_Y(n';\beta), ~ 0 \},
\end{equation}
and otherwise, if \(n \notin \Nint(\T_\W)\), then
\begin{equation}\label{eq:nodeWiseQfunction2}
\tilde Q_Y(n;\beta) = 0.
\end{equation}
The \(\tilde Q_Y\)-function incorporates reward of future expansions, analogously to Q-functions in reinforcement learning, and forms the basis for the so-called Q-tree search algorithm \cite{Larsson2020}.
Q-tree search then expands those nodes \(n \in \N_{\text{leaf}}(\T_i)\) of the current tree \(\T_i\) for which \(\tilde Q_Y(n;\beta) > 0\), continuing this process until it reaches a tree \(\T_{q^*}\) for which all nodes \(n \in \N_{\text{leaf}}(\T_{q^*})\) are such that \(\tilde Q_Y(n;\beta) \leq 0\).
We will make use of the Q-tree search algorithm to generate abstractions for the purpose of path planning, which we discuss next.

\subsection{Path-Planning on Abstractions} \label{sec:planningOnAbstractions}
Given a sequence of strictly increasing \(\beta > 0\), denoted by \(\left\{ \beta_i \right\}_{i=1}^N\), we generate a corresponding sequence of trees \(\left\{ \mathcal T_{\beta_i} \right\}_{i=1}^N\) by solving the information-theoretic problem in Section \ref{sec:ITIB_Abstractions} by employing the Q-tree search algorithm. 
A corresponding sequence of graphs \(\left\{ \mathcal G(\mathcal T_{\beta_i} ) \right\}_{i=1}^{N}\) can then be constructed, where each \(\mathcal G(\mathcal T_{\beta_i})\) for \(i\in \{1,\ldots,N\}\) represents a multi-resolution depiction of the environment \(\mathcal W\) with fewer vertices than \(\mathcal G(\mathcal T_{\mathcal W})\).
We will now use these reduced graphs to form approximations to Problem \ref{prob:frp_approx}, which brings us to the following definitions.
\begin{definition}[\hspace{-0.5pt}\cite{Larsson2020}] \label{def:treeLeaf} 
	Let \(n \in \mathcal N(\mathcal T)\) be a node in the tree \(\mathcal T\in \mathcal T^{\mathcal Q}\).  
	\emph{The subtree of} \(\mathcal T \in \mathcal T^{\mathcal Q}\) \emph{rooted at node} \(n\) is denoted by \(\mathcal T_{(n)}\) and has node set
		\begin{equation*}
		\mathcal N\left(\mathcal T_{(n)} \right) = \Big\{ n' \in \mathcal N(\mathcal T): n' \in  \bigcup_{i} \mathcal D_i \Big\},
		\end{equation*}
		where \(\mathcal D_1 = \left\{ n \right\}\), \(\mathcal D_{i+1} = \mathcal A \left( \mathcal D_i \right)\), and 
		\begin{equation*}
		\mathcal A\left( \mathcal D_i \right) = \Big\{ n' \in \mathcal N(\mathcal T_{\mathcal W}): n' \in \bigcup_{\hat n \in \mathcal{D}_i}\mathcal C\left( \hat n \right) \Big\}.
		\end{equation*}
\end{definition}
\begin{definition} 
	An \emph{abstract path} (AP) is a sequence of nodes \(\hat{\pi} = \left\{z_0,\ldots, z_R \right\} \subseteq \mathcal N_{\text{leaf}}\left( \mathcal T \right)\) for some \(\mathcal T\in \mathcal T^{\mathcal Q}\), \(\mathcal T \neq \mathcal T_{\mathcal W}\), such that  each \(z \in \hat\pi\) is distinct, the nodes \(z_0\) and \(z_R\) satisfy \(s_0 \in \mathcal N_{\text{leaf}}(\mathcal T_{\mathcal W(z_0)})\) and \(s_g \in \mathcal N_{\text{leaf}}(\mathcal T_{\mathcal W(z_R)})\), respectively, and if \(R > 0\) then \(z_i\), \(z_{i+1}\) are nodal neighbors for all \(i \in \{0,\ldots,R-1\}\).
	An \emph{\(\varepsilon\)-feasible abstract path} (\(\varepsilon\)-AP) is an AP \(\hat \pi\) such that \(\underset{z \in \hat \pi}{\bigcup}\mathcal N_{\text{leaf}}(\mathcal T_{\mathcal W(z)}) \subseteq \mathcal P_{\varepsilon}\).
\end{definition}

To obtain an AP requires the definition of a cost-function for abstracted representations.
This is challenging since the cost must: (i) be consistent with an FRP on the finest resolution; (ii) account for the cost of traversing aggregated nodes; and (iii) monotonically decrease with increased resolution, or equivalently, with increased \(\beta\).
The criterion (iii) above is to ensure that the paths \(\left\{ \hat{ \pi}_{\beta_i} \right\}_{i=1}^N\) represent approximations to an FRP \(\pi\) in the sense that the cost of a path \(\hat \pi_{\beta_i}\) should reduce to, and approach that of, an FRP \(\pi\) as \(\beta_i \to \infty\).

\begin{figure}[!t]
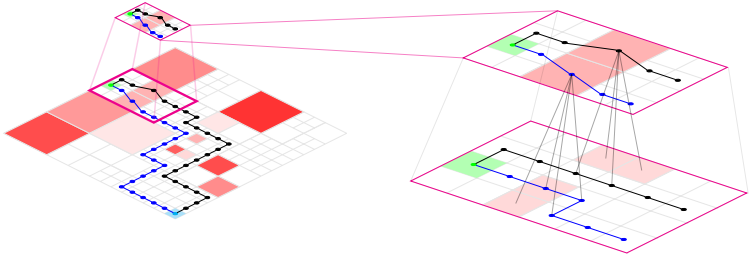

	\centering
	\begin{adjustbox}{max size={0.6\textwidth}}

	\end{adjustbox}
	\caption{Example OG of an environment \(\mathcal W\) with two AP \(\hat \pi_1\) (blue) and \(\hat \pi_2\) (black) leading from a given start location (cyan) to goal location (green).  For probabilistic obstacles (red), shading scales with the probability of occupancy.  Notice that both \(\hat \pi_1\) and \(\hat \pi_2\) pass through identical (adjacent) abstracted cells with non-zero probability of occupancy.  To determine the feasibility of these paths requires refinement, shown to the right.  Observe that, upon refinement, the path \(\hat \pi_1\) (blue) will be deemed infeasible, as it is not possible to traverse the left abstracted cell in the direction stipulated by \(\hat \pi_1\).  In contrast, the path \(\hat \pi_2\) (black) is feasible, since the right abstracted cell can be traversed in the direction required by \(\hat \pi_1\).  This shows the difficulty in guaranteeing path feasibility when planning on abstractions.  Our framework and definition of feasibility precludes this situation from occurring.}
	\label{fig:epsilonObsEx}
\end{figure}

To this end, we define \(V^{\lambda}_{\varepsilon}: \mathcal N\left( \mathcal T_{\mathcal W}\right) \to \Re_{+}\) as
\begin{equation} \label{eq:defOfAbstractV}
V^{\lambda}_{\varepsilon}(n) = \begin{cases}
c^{\lambda}_{\varepsilon}(n), & n \in \mathcal N_{\text{leaf}}\left( \mathcal T_{\mathcal W}\right), \\
%
\frac{1}{2^d}\sum_{n'\in\mathcal C(n)} V^{\lambda}_{\varepsilon}(n'), & \text{otherwise},
\end{cases}
\end{equation}
and consider the objective
\begin{equation} \label{eq:abstractObjectiveFunction}
\hat J^{\lambda}_{\varepsilon}(\hat \pi ; \beta) = \sum_{z \in \hat \pi} 2^{d r(z)}V^{\lambda}_{\varepsilon}(z).
\end{equation}   
Note that \(\hat J^{\lambda}_{\varepsilon}(\hat \pi ; \beta)\) depends on the trade-off parameter \(\beta > 0\), as \(\beta\) determines the tree \(\mathcal T_{\beta} \in \mathcal T^{\mathcal Q}\) on which the AP \(\hat \pi\) is planned.
Given \(\beta > 0\), we consider the problem
\begin{equation} \label{eq:abstractPathProblem}
\hat \pi_{\beta}^* \in \argmin_{\hat \pi \in \hat \Pi_{\beta}} \hat J^{\lambda}_{\varepsilon}(\hat \pi;\beta),
\end{equation} 
where \(\hat \Pi_\beta\) is the set of AP in \(\mathcal T_{\beta} \in \mathcal T^{\mathcal Q}\).

What remains to show is that the objective function value of \eqref{eq:abstractObjectiveFunction} monotonically decreases with increased \(\beta > 0\).
The following theorem establishes this result.
\begin{theorem} \label{thm:monotonicCost} 
	Let \(\varepsilon \in [0,1]\) and assume that there exists \(\beta_2 > \beta_1 > 0\) such that the corresponding trees \(\mathcal T_{\beta_1},\mathcal T_{\beta_2} \in \mathcal T^{\mathcal Q}\) satisfy \(\mathcal N \left(\mathcal T_{\beta_2}\right) \setminus \mathcal N \left(\mathcal T_{\beta_1}\right) = \mathcal C(n)\) for some \(n \in \mathcal N_{\text{leaf}}\left(\mathcal T_{\beta_1} \right)\).
	Furthermore, let \(\hat \pi^*_{\beta_1} \subseteq \mathcal N_{\text{leaf}}\left(\mathcal T_{\beta_1}\right)\) denote an abstract path in the tree \(\mathcal T_{\beta_1} \in \mathcal T^{\mathcal Q}\) satisfying \(\hat \pi^*_{\beta_1} \in \argmin_{\hat \pi_{\beta_1} \in \hat \Pi_{\beta_1}}\hat J^{\lambda}_{\varepsilon}(\hat \pi_{\beta_1};\beta_1)\).
	Then there exists an abstract path \(\hat \pi_{\beta_2} \subseteq \mathcal N_{\text{leaf}}\left(\mathcal T_{\beta_2}\right)\) such that \(\hat J^{\lambda}_{\varepsilon}(\hat \pi^*_{\beta_1};\beta_1) \geq \hat J^{\lambda}_{\varepsilon}(\hat \pi_{\beta_2};\beta_2)\).
\end{theorem}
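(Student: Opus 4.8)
The plan is to case-split on whether the node $n$ that is refined when passing from $\mathcal{T}_{\beta_1}$ to $\mathcal{T}_{\beta_2}$ actually occurs on the given optimal abstract path $\hat\pi^*_{\beta_1}$. If $n\notin\hat\pi^*_{\beta_1}$, then every node of $\hat\pi^*_{\beta_1}$ lies in $\mathcal{N}_{\text{leaf}}(\mathcal{T}_{\beta_1})\setminus\{n\}\subseteq\mathcal{N}_{\text{leaf}}(\mathcal{T}_{\beta_2})$, and since the quantities $\mathbf{p}(\cdot)$, $r(\cdot)$, $H(\cdot)$, the nodal-neighbor relation, and the subtrees $\mathcal{T}_{\mathcal{W}(\cdot)}$ are all attributes of $\mathcal{T}_{\mathcal{W}}$ (not of the particular tree in $\mathcal{T}^{\mathcal{Q}}$ being used), $\hat\pi^*_{\beta_1}$ is itself a valid abstract path in $\mathcal{T}_{\beta_2}$; because $\hat J^{\lambda}_{\varepsilon}$ is a node-wise sum of the $\beta$-independent terms $2^{dr(z)}V^{\lambda}_{\varepsilon}(z)$, its two values coincide, so taking $\hat\pi_{\beta_2}=\hat\pi^*_{\beta_1}$ settles this case with equality.

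For the substantive case $n\in\hat\pi^*_{\beta_1}$, write $\hat\pi^*_{\beta_1}=\{z_0,\dots,z_R\}$ with $z_i=n$. I would build $\hat\pi_{\beta_2}$ by deleting the single entry $z_i$ and splicing in an ordered sequence $(n'_1,\dots,n'_m)$ of \emph{distinct} children of $n$ chosen so that: (a) $n'_1$ is a nodal neighbor of $z_{i-1}$, or, if $i=0$, satisfies $s_0\in\mathcal{N}_{\text{leaf}}(\mathcal{T}_{\mathcal{W}(n'_1)})$; (b) $n'_m$ is a nodal neighbor of $z_{i+1}$, or, if $i=R$, satisfies $s_g\in\mathcal{N}_{\text{leaf}}(\mathcal{T}_{\mathcal{W}(n'_m)})$; and (c) consecutive $n'_l,n'_{l+1}$ are nodal neighbors. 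The feasibility of this rests on two geometric facts about the dyadic quadtree: first, because $z_{i-1}$ is a nodal neighbor of $n$, at least one child of $n$ on the face of $H(n)$ facing $z_{i-1}$ is again a nodal neighbor of $z_{i-1}$ (a short case check on whether $r(z_{i-1})$ exceeds, equals, or is below $r(n)$, picking in the last case the child whose transverse extent contains that of $H(z_{i-1})$), and likewise for $z_{i+1}$; and, when $n$ is an endpoint of $\hat\pi^*_{\beta_1}$, the unique child of $n$ containing the unit cell $s_0$ (resp. $s_g$) serves for (a) (resp. (b)). Second, $\mathcal{C}(n)$ under the nodal-neighbor relation is a connected graph (a $2\times\cdots\times2$ grid), so a simple path joining the chosen entry and exit children exists, with $m\le 2^d$. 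Distinctness of the spliced path is automatic since $\mathcal{C}(n)=\mathcal{N}(\mathcal{T}_{\beta_2})\setminus\mathcal{N}(\mathcal{T}_{\beta_1})$ is disjoint from the retained nodes $\{z_j:j\ne i\}\subseteq\mathcal{N}(\mathcal{T}_{\beta_1})$; consecutive pairs are nodal neighbors by (a)--(c); and the start/goal conditions are either inherited unchanged (when $n$ is interior to the path) or ensured by (a)/(b) — so $\hat\pi_{\beta_2}$ is a genuine abstract path in $\mathcal{T}_{\beta_2}$ (which we may take to be $\neq\mathcal{T}_{\mathcal{W}}$ without loss of generality, as otherwise the statement is vacuous).

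It remains to compare costs. Using $r(n')=r(n)-1$ for $n'\in\mathcal{C}(n)$, $|\mathcal{C}(n)|=2^d$, and the recursion defining $V^{\lambda}_{\varepsilon}$,
$$
2^{dr(n)}V^{\lambda}_{\varepsilon}(n)=2^{dr(n)}\cdot\frac{1}{2^d}\sum_{n'\in\mathcal{C}(n)}V^{\lambda}_{\varepsilon}(n')=\sum_{n'\in\mathcal{C}(n)}2^{dr(n')}V^{\lambda}_{\varepsilon}(n').
$$
Since $\{n'_1,\dots,n'_m\}\subseteq\mathcal{C}(n)$ and $V^{\lambda}_{\varepsilon}$ is non-negative (indeed positive, as $\lambda_1>0$ and $M^{\lambda}_{\varepsilon}>0$ force $c^{\lambda}_{\varepsilon}>0$), we get $\sum_{l=1}^{m}2^{dr(n'_l)}V^{\lambda}_{\varepsilon}(n'_l)\le 2^{dr(n)}V^{\lambda}_{\varepsilon}(n)$, with equality exactly when $m=2^d$. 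Because $\hat J^{\lambda}_{\varepsilon}$ is additive over path nodes and only $n$ was swapped out,
$$
\hat J^{\lambda}_{\varepsilon}(\hat\pi_{\beta_2};\beta_2)=\hat J^{\lambda}_{\varepsilon}(\hat\pi^*_{\beta_1};\beta_1)-2^{dr(n)}V^{\lambda}_{\varepsilon}(n)+\sum_{l=1}^{m}2^{dr(n'_l)}V^{\lambda}_{\varepsilon}(n'_l)\le \hat J^{\lambda}_{\varepsilon}(\hat\pi^*_{\beta_1};\beta_1),
$$
which is the claim.

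\textbf{Main obstacle.} The cost comparison is routine once the construction is in hand; the real work is the geometric bookkeeping of the second paragraph — proving that a child of $n$ adjacent to each path-neighbor of $n$ always exists under the exact nodal-neighbor definition (including the uniqueness clause on the maximizing coordinate), that such children can be chained through $\mathcal{C}(n)$, and that the endpoint cases (especially the degenerate $R=0$ case $\hat\pi^*_{\beta_1}=\{n\}$) are handled without breaking distinctness or the $s_0,s_g$ coverage conditions.
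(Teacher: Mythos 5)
Your proof is correct and follows essentially the same route as the paper's: a case split on whether the expanded node $n$ lies on $\hat\pi^*_{\beta_1}$, rerouting through a subset of $\mathcal{C}(n)$ when it does, and the key inequality $2^{dr(n)}V^{\lambda}_{\varepsilon}(n) \geq \sum_{n'\in\mathcal S}2^{dr(n')}V^{\lambda}_{\varepsilon}(n')$ for $\mathcal S\subseteq\mathcal C(n)$, which the paper isolates as a lemma and proves by contradiction while you derive it directly from the recursion and non-negativity of $V^{\lambda}_{\varepsilon}$. The geometric bookkeeping you flag as the main obstacle (existence and chaining of children adjacent to $z_{i-1}$ and $z_{i+1}$) is simply asserted in the paper's proof, so your sketch of it is, if anything, more complete than the original.
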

\begin{proof}
The proof is presented in Appendix \ref{app:thmMonotoneCost}.
\end{proof}

By definition, \(\hat J^{\lambda}_{\varepsilon}(\hat \pi_{\beta_2};\beta_2) \geq \hat J^{\lambda}_{\varepsilon}(\hat \pi^*_{\beta_2};\beta_2)\) for all \(\hat \pi_{\beta_2} \in \hat \Pi_{\beta_2}\), and hence Theorem \ref{thm:monotonicCost} establishes that \(\hat J^{\lambda}_{\varepsilon}(\hat \pi^*_{\beta_1};\beta_1) \geq \hat J^{\lambda}_{\varepsilon}(\hat \pi^*_{\beta_2};\beta_2)\).
It should be noted that the same result holds even though two consecutive trees in the sequence \(\left\{\mathcal T_{\beta_i} \right\}_{i=1}^{N}\) do not necessarily satisfy \(\mathcal N \left(\mathcal T_{\beta_{i+1}}\right) \setminus \mathcal N \left(\mathcal T_{\beta_i}\right) = \mathcal C(n)\) for some \(n \in \mathcal N_{\text{leaf}}\left(\mathcal T_{\beta_i} \right)\).
This follows from the observation that any tree \(\mathcal T_m \in \mathcal T^{\mathcal Q}\) can be obtained from another sequence of trees, and so moving from \(\mathcal T_{\beta_{i}}\) to \(\mathcal T_{\beta_{i+1}}\) can be done by considering a sequence \(\left\{ \mathcal T_{k} \right\}_{k=0}^m\) where \(\mathcal T_{0} = \mathcal T_{\beta_i}\) and \(\mathcal T_{m} = \mathcal T_{\beta_{i+1}}\) for some \(m > 0\) where  \(\mathcal N \left(\mathcal T_{k+1}\right) \setminus \mathcal N \left(\mathcal T_{k}\right) = \mathcal C(n)\) for some \(n \in \mathcal N_{\text{leaf}}\left(\mathcal T_k \right)\) holds for all \(k \in \{0,\ldots,m-1\}\).

Theorem \ref{thm:monotonicCost} guarantees the monotonic improvement of the cost as a function of resolution.
It does not, however, guarantee that the cost of an AP converges to that of an FRP as \(\beta\to\infty\).
To address this, we require the following proposition.
\begin{proposition} \label{thm:convQtreeSearch}
Let \( \Delta I_Y: \Nint(\T_\W) \to [0,\infty) \) be the change in relevant information by expanding the node \(n \in \Nint(\T_\W)\).\footnote{See \cite{Larsson2020} for more information.}
Then the Q-tree search algorithm returns the tree \(\T_\W\) as \(\beta \to \infty\) if and only if \(\Delta I_Y(n) > 0\) for all \(n \in \N_{\ell-1}(\T_\W)\).
\end{proposition}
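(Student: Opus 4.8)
The plan is to reduce the claim about the \emph{output} of Q-tree search to a statement purely about the node-wise function $\tilde Q_Y(\cdot;\beta)$, and then analyze the limit $\beta\to\infty$ using only the non-negativity of $\Delta I_X,\Delta I_Y$ and the recursion \eqref{eq:nodeWiseQfunction1}--\eqref{eq:nodeWiseQfunction2}. For the first step, observe that $\tilde Q_Y(n;\beta)$ depends only on $n$, $\beta$, and $\T_\W$, not on the current tree. I would then prove by induction on depth that, when Q-tree search is started from $\texttt{Root}(\T_\W)$, a node $n\in\N(\T_\W)$ is expanded if and only if $\tilde Q_Y(m;\beta)>0$ for every node $m$ on the path from $\texttt{Root}(\T_\W)$ to $n$ (inclusive): the root is expanded iff $\tilde Q_Y(\texttt{Root}(\T_\W);\beta)>0$, and a node can become a leaf of the current tree only after its parent has been expanded, at which point it is expanded iff its own $\tilde Q_Y$ value is positive. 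Since every ancestor of an interior node is interior, and the returned tree equals $\T_\W$ precisely when every node of $\Nint(\T_\W)$ has been expanded, this yields the reduction: Q-tree search returns $\T_\W$ if and only if $\tilde Q_Y(n;\beta)>0$ for all $n\in\Nint(\T_\W)$.

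For the ``only if'' direction I would argue the contrapositive. Suppose $\Delta I_Y(n)=0$ for some $n\in\N_{\ell-1}(\T_\W)$. The children of such an $n$ are leaves of $\T_\W$, so $\tilde Q_Y(n';\beta)=0$ for all $n'\in\C(n)$ by \eqref{eq:nodeWiseQfunction2}, and hence by \eqref{eq:nodeWiseQfunction1} and \eqref{eq:node-wiseDeltaL}, $\tilde Q_Y(n;\beta)=\max\{\Delta I_Y(n)-\tfrac{1}{\beta}\Delta I_X(n),\,0\}=\max\{-\tfrac{1}{\beta}\Delta I_X(n),\,0\}=0$ for every $\beta>0$, using $\Delta I_X(n)\ge 0$. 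By the reduction, the node $n$ is never expanded, so the returned tree has $n$ as a leaf and therefore differs from $\T_\W$ for \emph{every} $\beta>0$; in particular Q-tree search does not return $\T_\W$ as $\beta\to\infty$.

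For the ``if'' direction, suppose $\Delta I_Y(n)>0$ for all $n\in\N_{\ell-1}(\T_\W)$. I would show, by induction on depth from $\ell-1$ down to $0$, that $\liminf_{\beta\to\infty}\tilde Q_Y(n;\beta)>0$ for every $n\in\Nint(\T_\W)$. For $n\in\N_{\ell-1}(\T_\W)$, as above $\tilde Q_Y(n;\beta)=\max\{\Delta I_Y(n)-\tfrac{1}{\beta}\Delta I_X(n),0\}\to\Delta I_Y(n)>0$. For $n$ at depth $k\le\ell-2$, fix any $n^{\star}\in\C(n)$; then $n^{\star}\in\Nint(\T_\W)$, and discarding the non-negative term $\Delta I_Y(n)$ and the non-negative summands $\tilde Q_Y(n';\beta)$ for $n'\ne n^{\star}$ gives $\tilde Q_Y(n;\beta)\ge -\tfrac{1}{\beta}\Delta I_X(n)+\tilde Q_Y(n^{\star};\beta)$, so $\liminf_{\beta\to\infty}\tilde Q_Y(n;\beta)\ge\liminf_{\beta\to\infty}\tilde Q_Y(n^{\star};\beta)>0$ by the inductive hypothesis. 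Since $\Nint(\T_\W)$ is finite, there is a single $\bar\beta$ with $\tilde Q_Y(n;\beta)>0$ for all $n\in\Nint(\T_\W)$ whenever $\beta>\bar\beta$, and the reduction then shows Q-tree search returns $\T_\W$ for all $\beta>\bar\beta$, i.e.\ as $\beta\to\infty$.

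The substantive part I expect to be the first step: correctly transferring the recursively defined, \emph{tree-independent} quantity $\tilde Q_Y$ into a statement about which nodes the greedy expansion actually visits, and pinning down the intended reading of ``returns $\T_\W$ as $\beta\to\infty$'' (I take it to mean ``for all sufficiently large $\beta$'', which the argument above in fact establishes). Once that reduction is in place, the rest is routine, relying only on $\Delta I_X,\Delta I_Y\ge 0$, the boundary condition \eqref{eq:nodeWiseQfunction2}, and finiteness of $\Nint(\T_\W)$; the case $\ell=1$, where $\N_{\ell-1}(\T_\W)$ and $\Nint(\T_\W)$ both reduce to $\{\texttt{Root}(\T_\W)\}$, is covered verbatim.
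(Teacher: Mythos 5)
Your proposal is correct, but it takes a genuinely different route from the paper. The paper's proof first establishes the exact limit \(\lim_{\beta\to\infty}\tilde Q_Y(n;\beta)=\sum_{s\in\Nint(\T_{\W(n)})}\Delta I_Y(s)\) by a squeeze argument: it introduces an auxiliary function \(\tilde P_Y\) summing \(\Delta\tilde L_Y\) over the subtree, shows \(\tilde P_Y(n;\beta)\leq \tilde Q_Y(n;\beta)\leq\sum_{s\in\Nint(\T_{\W(n)})}\Delta I_Y(s)\), and then derives both directions of the equivalence from the sign of that subtree sum. You dispense with \(\tilde P_Y\), the upper-bound lemma, and the exact limit entirely: for the ``only if'' direction you argue the contrapositive, using the exact vanishing \(\tilde Q_Y(n;\beta)=\max\{-\tfrac{1}{\beta}\Delta I_X(n),0\}=0\) at a depth-\((\ell-1)\) node with \(\Delta I_Y(n)=0\), which shows that node is \emph{never} expanded for any \(\beta\); for the ``if'' direction you only need a one-child lower bound \(\tilde Q_Y(n;\beta)\geq-\tfrac{1}{\beta}\Delta I_X(n)+\tilde Q_Y(n^{\star};\beta)\) and an induction on depth to get \(\liminf_{\beta\to\infty}\tilde Q_Y(n;\beta)>0\). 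You also make explicit the reduction from ``the algorithm returns \(\T_\W\)'' to ``\(\tilde Q_Y(n;\beta)>0\) for all \(n\in\Nint(\T_\W)\)'' (via which nodes are actually reached by the expansion process), a step the paper leaves implicit. The paper's approach buys the stronger quantitative fact that \(\tilde Q_Y\) converges to the subtree information sum, which is of independent interest; yours is leaner, requires only non-negativity of \(\Delta I_X,\Delta I_Y\) and finiteness of the tree, and the contrapositive argument for the ``only if'' direction is arguably tighter than the paper's passage from ``\(n\) is expanded for large \(\beta\)'' to ``\(\lim_{\beta\to\infty}\tilde Q_Y(n;\beta)>0\).''
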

\begin{proof}
The proof is presented in Appendix \ref{app:proofThmConvQtreeSearch}.
\end{proof}
Theorem \ref{thm:monotonicCost} in conjunction with Proposition \ref{thm:convQtreeSearch} guarantee that the path cost sequence \(\{ \hat J^{\lambda}_{\varepsilon}(\hat \pi^*_{\beta_i};\beta_i)\}_{i=1}^N\) monotonically decreases and converges to \(J^{\lambda}_{\varepsilon}(\pi^*)\) as \(\beta_{N} \to \infty\).
We will now present a number of other properties of our problem, for which the following fact is useful.
\begin{fact} \label{fact:vLeafsRelation}
	Let \(u \in \{0,\ldots,\ell \}\), \(\varepsilon \in [0,1]\) and \(n \in r^{-1}(\{u\})\).
	Then \(V^{\lambda}_{\varepsilon}(n) = \frac{1}{2^{d r(n)}}\sum_{n'\in \mathcal N_{\text{leaf}}\left( \mathcal T_{\mathcal W(n)} \right)}V^{\lambda}_{\varepsilon}(n')\).
\end{fact}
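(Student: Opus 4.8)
The plan is to prove the identity by induction on the $r$-value $u = r(n)$, using the recursive definition \eqref{eq:defOfAbstractV} of $V^{\lambda}_{\varepsilon}$ together with the partition property in Definition~\ref{def:nodeHyperCube}(3). Throughout, $\mathcal T_{\mathcal W(n)}$ denotes the subtree of $\mathcal T_{\mathcal W}$ rooted at $n$ in the sense of Definition~\ref{def:treeLeaf} (applied with $\mathcal T = \mathcal T_{\mathcal W}$), so that $\mathcal N_{\text{leaf}}(\mathcal T_{\mathcal W(n)})$ is exactly the set of finest-resolution (unit-hypercube) leaves of $\mathcal T_{\mathcal W}$ contained in $H(n)$; note $n \in r^{-1}(\{u\})$ simply means $r(n)=u$.

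For the base case $u=0$, I would observe that $r(n)=\ell-k=0$ forces $n$ to sit at depth $\ell$, hence $n \in \mathcal N_{\text{leaf}}(\mathcal T_{\mathcal W})$; then $\mathcal T_{\mathcal W(n)}$ consists of the single node $n$, $2^{d r(n)} = 1$, and the claimed identity is immediate from the first case of \eqref{eq:defOfAbstractV}. For the inductive step, assuming the identity for every node of $r$-value $u$ and taking $n$ with $r(n)=u+1\ge 1$, the node $n$ is interior in $\mathcal T_{\mathcal W}$, so the second case of \eqref{eq:defOfAbstractV} gives $V^{\lambda}_{\varepsilon}(n)=2^{-d}\sum_{n'\in\mathcal C(n)}V^{\lambda}_{\varepsilon}(n')$. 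Each child $n'\in\mathcal C(n)$ lies one depth below $n$, hence $r(n')=u$, so applying the induction hypothesis to each child and factoring constants out yields
\[
V^{\lambda}_{\varepsilon}(n)=\frac{1}{2^{d(u+1)}}\sum_{n'\in\mathcal C(n)}\ \sum_{n''\in\mathcal N_{\text{leaf}}(\mathcal T_{\mathcal W(n')})}V^{\lambda}_{\varepsilon}(n'').
\]

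The one step that needs genuine (though brief) justification is that this double sum is a sum over $\mathcal N_{\text{leaf}}(\mathcal T_{\mathcal W(n)})$ with no repetitions. I would get this from Definition~\ref{def:treeLeaf}, which gives $\mathcal N(\mathcal T_{\mathcal W(n)}) = \{n\} \cup \bigcup_{n'\in\mathcal C(n)}\mathcal N(\mathcal T_{\mathcal W(n')})$, combined with the fact that distinct children of $n$ have disjoint descendant sets in a tree; equivalently, by Definition~\ref{def:nodeHyperCube}(3) the hypercubes $\{H(n')\}_{n'\in\mathcal C(n)}$ partition $H(n)$, so no finest-resolution leaf lies beneath two different children. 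Since $n$ itself is not a leaf, it follows that $\mathcal N_{\text{leaf}}(\mathcal T_{\mathcal W(n)}) = \bigsqcup_{n'\in\mathcal C(n)}\mathcal N_{\text{leaf}}(\mathcal T_{\mathcal W(n')})$, the double sum collapses to $\sum_{n''\in\mathcal N_{\text{leaf}}(\mathcal T_{\mathcal W(n)})}V^{\lambda}_{\varepsilon}(n'')$, and using $r(n)=u+1$ gives precisely the asserted formula, closing the induction.

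I do not anticipate a real obstacle: the content is entirely the interaction of the recursion \eqref{eq:defOfAbstractV} with the quadtree structure, and the only non-mechanical point is the disjoint-union decomposition of $\mathcal N_{\text{leaf}}(\mathcal T_{\mathcal W(n)})$ over the children of $n$, which is immediate from the partition property of the node hypercubes (and also explains why the normalizing factor accumulates to $1/2^{d r(n)}$ over the $r(n)$ refinement levels).
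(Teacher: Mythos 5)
Your proposal is correct and follows essentially the same route as the paper's proof: induction on the $r$-value, with the base case at the leaves and the inductive step collapsing the double sum over children into a single sum over $\mathcal N_{\text{leaf}}(\mathcal T_{\mathcal W(n)})$. The extra justification you give for the disjoint-union decomposition of the leaf set over the children is a detail the paper leaves implicit, but it does not change the argument.
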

\begin{proof}
The proof is presented in Appendix \ref{app:proofFctVleafsRelation}.
\end{proof}
\begin{proposition} \label{prop:epsFeasibleAbsPath}
	Let \(\varepsilon \in [0,1]\) and \(\beta > 0\).
	Then \(\hat J^{\lambda}_{\varepsilon}(\hat \pi;\beta) < M^{\lambda}_{\varepsilon}\) if and only if \(\hat \pi\) is a \(\varepsilon\)-feasible abstract path.
\end{proposition}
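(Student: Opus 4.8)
The plan is to collapse the abstract cost in \eqref{eq:abstractObjectiveFunction} down to a plain sum of finest-resolution cell costs and then compare that sum against \(M^{\lambda}_{\varepsilon}\) term by term. First I would invoke Fact \ref{fact:vLeafsRelation}: since \(\hat\pi \subseteq \mathcal N_{\text{leaf}}(\mathcal T_{\beta}) \subseteq \mathcal N(\mathcal T_{\mathcal W})\), every \(z \in \hat\pi\) lies in \(r^{-1}(\{r(z)\})\), so \(2^{d r(z)} V^{\lambda}_{\varepsilon}(z) = \sum_{n' \in \mathcal N_{\text{leaf}}(\mathcal T_{\mathcal W(z)})} V^{\lambda}_{\varepsilon}(n')\), and because each such \(n'\) is a leaf of \(\mathcal T_{\mathcal W}\), the definition \eqref{eq:defOfAbstractV} gives \(V^{\lambda}_{\varepsilon}(n') = c^{\lambda}_{\varepsilon}(n')\). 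Summing over \(\hat\pi\) yields \(\hat J^{\lambda}_{\varepsilon}(\hat\pi;\beta) = \sum_{z\in\hat\pi}\sum_{n'\in\mathcal N_{\text{leaf}}(\mathcal T_{\mathcal W(z)})} c^{\lambda}_{\varepsilon}(n')\). The step I want to handle carefully is that these inner index sets are pairwise disjoint as \(z\) ranges over \(\hat\pi\): the \(z\)'s are distinct leaves of the single tree \(\mathcal T_{\beta}\), so by property 3 of Definition \ref{def:nodeHyperCube} their hypercubes \(H(z)\) are pairwise disjoint, hence no finest-resolution cell can belong to more than one \(\mathcal N_{\text{leaf}}(\mathcal T_{\mathcal W(z)})\). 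Setting \(S := \bigcup_{z\in\hat\pi}\mathcal N_{\text{leaf}}(\mathcal T_{\mathcal W(z)}) \subseteq \mathcal N_{\text{leaf}}(\mathcal T_{\mathcal W})\), this reduces to \(\hat J^{\lambda}_{\varepsilon}(\hat\pi;\beta) = \sum_{x\in S} c^{\lambda}_{\varepsilon}(x)\), which is precisely the object the \(\varepsilon\)-feasibility condition constrains, since \(\hat\pi\) is \(\varepsilon\)-feasible exactly when \(S \subseteq \mathcal P_{\varepsilon}\).

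For the ``if'' direction I would assume \(\hat\pi\) is \(\varepsilon\)-feasible, so \(S\subseteq\mathcal P_{\varepsilon}\) and hence \(c^{\lambda}_{\varepsilon}(x) = \lambda_1 + \lambda_2 p(y=1|x) \le \lambda_1 + \lambda_2\varepsilon\) for every \(x\in S\); since \(|S| \le |\mathcal N_{\text{leaf}}(\mathcal T_{\mathcal W})| \le 2^{d\ell}\) (a hypercube of side \(2^{\ell}\) contains \(2^{d\ell}\) unit cells), this gives \(\hat J^{\lambda}_{\varepsilon}(\hat\pi;\beta) \le 2^{d\ell}(\lambda_1 + \varepsilon\lambda_2) < 2^{d\ell}(\lambda_1 + \varepsilon\lambda_2) + \gamma = M^{\lambda}_{\varepsilon}\), using \(\gamma>0\). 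For the ``only if'' direction I would argue the contrapositive: if \(\hat\pi\) is not \(\varepsilon\)-feasible there is \(x^{\star}\in S\) with \(p(y=1|x^{\star})>\varepsilon\), i.e.\ \(x^{\star}\notin\mathcal P_{\varepsilon}\), so \(c^{\lambda}_{\varepsilon}(x^{\star}) = M^{\lambda}_{\varepsilon}\) by \eqref{eq:FRPcost}; as \(c^{\lambda}_{\varepsilon} \ge 0\) everywhere (because \(\lambda_1,\lambda_2\ge 0\), \(p(\cdot)\in[0,1]\), and \(M^{\lambda}_{\varepsilon}>0\)), we get \(\hat J^{\lambda}_{\varepsilon}(\hat\pi;\beta) \ge c^{\lambda}_{\varepsilon}(x^{\star}) = M^{\lambda}_{\varepsilon}\), so the inequality \(\hat J^{\lambda}_{\varepsilon}(\hat\pi;\beta) < M^{\lambda}_{\varepsilon}\) fails. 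Combining the two directions yields the claimed equivalence.

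The only genuine obstacle is the disjointness bookkeeping in the first step, which is needed to pass from the nested double sum to \(\sum_{x\in S} c^{\lambda}_{\varepsilon}(x)\) without over- or under-counting; everything after that is arithmetic driven by the explicit value of \(M^{\lambda}_{\varepsilon}\) and the case split in \eqref{eq:FRPcost}, and even the disjointness obstacle is mild, resting only on the partition property of quadtree hypercubes from Definition \ref{def:nodeHyperCube}.
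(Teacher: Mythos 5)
Your proposal is correct and follows essentially the same route as the paper: both flatten \(\hat J^{\lambda}_{\varepsilon}(\hat\pi;\beta)\) via Fact \ref{fact:vLeafsRelation} into a sum of \(c^{\lambda}_{\varepsilon}\) over the union of finest-resolution leaves, bound that sum by \(2^{d\ell}(\lambda_1+\varepsilon\lambda_2) < M^{\lambda}_{\varepsilon}\) in the feasible case, and observe that a single cell in \(\mathcal P^{c}_{\varepsilon}\) already contributes \(M^{\lambda}_{\varepsilon}\) in the infeasible case. The only differences are presentational (direct argument plus contrapositive versus contradiction plus an \(\mathcal A_{\hat\pi}/\mathcal B_{\hat\pi}\) split), and your explicit disjointness check for the sets \(\mathcal N_{\text{leaf}}(\mathcal T_{\mathcal W(z)})\) is a point the paper passes over silently.
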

\begin{proof}
The proof is presented in Appendix \ref{app:ProofEpsFeasibleAbsPath}.
\end{proof}
\begin{corollary} \label{cor:epsFeasibleFRP}
	Let \(\varepsilon \in [0,1]\).
	Then \(J^{\lambda}_{\varepsilon}(\pi) < M_{\varepsilon}^{\lambda}\) if and only if \(\pi\) is a \(\varepsilon\)-feasible finest resolution path.
\end{corollary}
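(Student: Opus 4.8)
The plan is to prove both implications directly, regarding the statement as the restriction of Proposition~\ref{prop:epsFeasibleAbsPath} to paths living on the finest-resolution tree. The bridge I would use is that an FRP $\pi$ is an abstract path every node of which has $r$-value $0$: for such a node $z\in\mathcal N_{\text{leaf}}(\mathcal T_{\mathcal W})$ one has $2^{dr(z)}=1$ and $V^{\lambda}_{\varepsilon}(z)=c^{\lambda}_{\varepsilon}(z)$ by the first branch of~\eqref{eq:defOfAbstractV}, so $\hat J^{\lambda}_{\varepsilon}(\pi;\beta)=J^{\lambda}_{\varepsilon}(\pi)$; moreover $\mathcal T_{\mathcal W(z)}$ reduces to the singleton $\{z\}$, hence $\bigcup_{z\in\pi}\mathcal N_{\text{leaf}}(\mathcal T_{\mathcal W(z)})=\pi$ and ``$\varepsilon$-feasible abstract path'' collapses to ``$\pi\subseteq\mathcal P_\varepsilon$,'' which is exactly the definition of an $\varepsilon$-feasible FRP. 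Thus the corollary is Proposition~\ref{prop:epsFeasibleAbsPath} read on $\mathcal T_{\mathcal W}$, once we observe that the clause $\mathcal T\neq\mathcal T_{\mathcal W}$ in the definition of an AP is never used in the cost/feasibility equivalence there.

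For completeness I would also give the self-contained argument. For the ``if'' direction, assume $\pi$ is $\varepsilon$-feasible, so $p(y=1\mid x)\le\varepsilon$ and hence $c^{\lambda}_{\varepsilon}(x)=\lambda_1+\lambda_2 p(y=1\mid x)\le\lambda_1+\varepsilon\lambda_2$ for every $x\in\pi$. Since $\pi$ consists of distinct elements of $\mathcal N_{\text{leaf}}(\mathcal T_{\mathcal W})$, and $\mathcal W$ is partitioned by exactly $2^{d\ell}$ unit hypercubes, we have $|\pi|\le 2^{d\ell}$, whence
\[
J^{\lambda}_{\varepsilon}(\pi)=\sum_{x\in\pi}c^{\lambda}_{\varepsilon}(x)\le 2^{d\ell}(\lambda_1+\varepsilon\lambda_2)<2^{d\ell}(\lambda_1+\varepsilon\lambda_2)+\gamma=M^{\lambda}_{\varepsilon},
\]
the strictness supplied by $\gamma>0$. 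For the ``only if'' direction I would argue by contraposition: if $\pi$ is not $\varepsilon$-feasible then some $x^\ast\in\pi$ has $p(y=1\mid x^\ast)>\varepsilon$, i.e.\ $x^\ast\in\mathcal N_{\text{leaf}}(\mathcal T_{\mathcal W})\setminus\mathcal P_\varepsilon$, so $c^{\lambda}_{\varepsilon}(x^\ast)=M^{\lambda}_{\varepsilon}$; since $\lambda_1>0$, $\lambda_2\ge 0$ and $p(\cdot)\ge 0$, every summand $c^{\lambda}_{\varepsilon}(x)$ is nonnegative, so $J^{\lambda}_{\varepsilon}(\pi)\ge c^{\lambda}_{\varepsilon}(x^\ast)=M^{\lambda}_{\varepsilon}$, contradicting $J^{\lambda}_{\varepsilon}(\pi)<M^{\lambda}_{\varepsilon}$.

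I do not expect a genuine obstacle here. The only points needing a little care are the cardinality bound $|\pi|\le 2^{d\ell}$ — this is precisely what keeps the feasible-case cost strictly below $M^{\lambda}_{\varepsilon}$ once $\gamma>0$ is invoked — and, if one wants the result to read literally as a corollary of Proposition~\ref{prop:epsFeasibleAbsPath} rather than re-proved, the short remark that the $\mathcal T\neq\mathcal T_{\mathcal W}$ restriction in the AP definition is immaterial to the equivalence being cited.
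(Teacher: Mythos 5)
Your proposal is correct and matches the paper, whose entire proof of this corollary is the remark that it is identical to the proof of Proposition~\ref{prop:epsFeasibleAbsPath}; your self-contained argument is just that proof specialized to \(\mathcal T_{\mathcal W}\), where \(r(x)=0\), \(V^{\lambda}_{\varepsilon}=c^{\lambda}_{\varepsilon}\), and the \(\mathcal B_{\hat\pi}=\emptyset\) step becomes your contrapositive. Your explicit observations about the cardinality bound \(\lvert\pi\rvert\le 2^{d\ell}\) and the immateriality of the \(\mathcal T\neq\mathcal T_{\mathcal W}\) clause are correct and, if anything, make the reduction more careful than the paper's one-line citation.
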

\begin{proof}
The proof is presented in Appendix \ref{app:ProofcorEpsFeasibleAbsPath}.
\end{proof}
The utility of Proposition \ref{prop:epsFeasibleAbsPath} and Corollary \ref{cor:epsFeasibleFRP} is that they provide conditions for quickly determining the feasibility of a path from knowledge of only the objective function value \(\hat J^{\lambda}_{\varepsilon}(\hat \pi;\beta)\) (or \(J^{\lambda}_{\varepsilon}(\pi)\)).
If the search terminates before an \(\varepsilon\)-feasible path has been found, the agent is provided with the most recent solution, which is guaranteed to be the least infeasible path available in the current tree.
Furthermore, as a result of Theorem \ref{thm:monotonicCost} and Proposition \ref{prop:epsFeasibleAbsPath}, if an AP \(\hat \pi^*_{\beta_{j}}\) is \(\varepsilon\)-feasible for some \(j \in \{1,\ldots, N\}\), then all AP in the sequence \(\{\hat \pi^*_{\beta_i} \}_{i=j}^{N}\) are also \(\varepsilon\)-feasible.
This ensures that the autonomous agent can never discover that a feasible path becomes infeasible with further refinement of the environment. 
An illustration is provided in Figure \ref{fig:epsilonObsEx}.
To conclude this section, we present the following proposition.
\begin{proposition} \label{prop:vEpsAndObs}
	Let \(\varepsilon \in [0,1]\) and \(n \in \mathcal N_{\text{int}}\left( \mathcal T_{\mathcal W} \right)\).
	Then \(V^{\lambda}_{\varepsilon}(n) > \lambda_1 + \varepsilon \lambda_2\) if and only if \(\mathcal N_{\text{leaf}}\left( \mathcal T_{\mathcal W(n)} \right) \cap \mathcal P^{c}_{\varepsilon} \neq \emptyset\).
\end{proposition}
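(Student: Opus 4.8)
The plan is to reduce everything to Fact~\ref{fact:vLeafsRelation}, which already expresses $V^{\lambda}_{\varepsilon}(n)$ as an average of base costs over the finest-resolution cells lying inside $H(n)$. Since $n \in \mathcal{N}_{\text{int}}(\mathcal{T}_{\mathcal{W}})$ we have $r(n) \in \{1,\ldots,\ell\}$, the subtree $\mathcal{T}_{\mathcal{W}(n)}$ is a complete $2^d$-ary tree of depth $r(n)$ (equivalently, by Definition~\ref{def:nodeHyperCube}, its leaf hypercubes partition the volume-$2^{d r(n)}$ cube $H(n)$ into unit cells), so $\mathcal{N}_{\text{leaf}}(\mathcal{T}_{\mathcal{W}(n)})$ has $2^{d r(n)}$ elements, each of which is a leaf of $\mathcal{T}_{\mathcal{W}}$ and hence an element of $\Omega_X$ on which $c^{\lambda}_{\varepsilon}$ is defined. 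Fact~\ref{fact:vLeafsRelation} then yields
\[
V^{\lambda}_{\varepsilon}(n) \;=\; \frac{1}{2^{d r(n)}} \sum_{n' \in \mathcal{N}_{\text{leaf}}(\mathcal{T}_{\mathcal{W}(n)})} c^{\lambda}_{\varepsilon}(n').
\]

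Next I would record the pointwise bounds on the summands. If $n' \in \mathcal{P}_{\varepsilon}$ then $p(y=1\mid n')\le \varepsilon$ and $\lambda_2\ge 0$ give $0 < \lambda_1 \le c^{\lambda}_{\varepsilon}(n') = \lambda_1 + \lambda_2 p(y=1\mid n') \le \lambda_1 + \varepsilon\lambda_2$; if $n' \in \mathcal{P}^c_{\varepsilon}$ then $c^{\lambda}_{\varepsilon}(n') = M^{\lambda}_{\varepsilon} = 2^{d\ell}(\lambda_1+\varepsilon\lambda_2)+\gamma$. In particular every summand is nonnegative. For the ``only if'' direction I would argue by contraposition: if $\mathcal{N}_{\text{leaf}}(\mathcal{T}_{\mathcal{W}(n)}) \cap \mathcal{P}^c_{\varepsilon} = \emptyset$, then every summand is $\le \lambda_1 + \varepsilon\lambda_2$, so the average $V^{\lambda}_{\varepsilon}(n)$ is also $\le \lambda_1 + \varepsilon\lambda_2$, which is the negation of $V^{\lambda}_{\varepsilon}(n) > \lambda_1 + \varepsilon\lambda_2$. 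For the ``if'' direction, take some $\bar n \in \mathcal{N}_{\text{leaf}}(\mathcal{T}_{\mathcal{W}(n)}) \cap \mathcal{P}^c_{\varepsilon}$; keeping only the $\bar n$ term in the sum and discarding the remaining nonnegative terms gives
\[
V^{\lambda}_{\varepsilon}(n) \;\ge\; \frac{M^{\lambda}_{\varepsilon}}{2^{d r(n)}} \;\ge\; \frac{M^{\lambda}_{\varepsilon}}{2^{d\ell}} \;=\; (\lambda_1+\varepsilon\lambda_2) + \frac{\gamma}{2^{d\ell}} \;>\; \lambda_1 + \varepsilon\lambda_2,
\]
where the middle inequality uses $r(n) \le \ell$ and the last uses $\gamma > 0$. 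This establishes both directions.

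The proof is essentially bookkeeping, and I do not expect a serious obstacle; the only point needing a moment's care is the chain $2^{d r(n)} \le 2^{d\ell}$ together with the observation that $M^{\lambda}_{\varepsilon}$ was deliberately scaled by $2^{d\ell}$ precisely so that a single obstacle cell among the (at most $2^{d\ell}$) finest cells underneath $n$ already forces the average strictly above $\lambda_1 + \varepsilon\lambda_2$. Recognizing that this $2^{d\ell}$-factor is the relevant slack, so that no quantitative count of obstacle cells is required, is the one mildly nontrivial step.
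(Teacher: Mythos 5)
Your proposal is correct and follows essentially the same route as the paper's proof: both decompose $V^{\lambda}_{\varepsilon}(n)$ via Fact~\ref{fact:vLeafsRelation} into contributions from leaves in $\mathcal P_{\varepsilon}$ (each bounded by $\lambda_1+\varepsilon\lambda_2$) and leaves in $\mathcal P^c_{\varepsilon}$ (each contributing $M^{\lambda}_{\varepsilon}$), and both exploit the $2^{d\ell}$ scaling of $M^{\lambda}_{\varepsilon}$ so that a single obstacle leaf already forces the average strictly above $\lambda_1+\varepsilon\lambda_2$. The only cosmetic difference is that you prove the ``only if'' direction by contraposition where the paper argues directly that $V^{\lambda}_{\varepsilon}(n)>\lambda_1+\lambda_2\varepsilon$ forces $\lvert\mathcal B_n\rvert>0$; the content is identical.
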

\begin{proof}
The proof is presented in Appendix \ref{app:proofPropVepsAndObs}.
\end{proof}
Proposition \ref{prop:vEpsAndObs} allows an autonomous agent to quickly identify which leaf nodes in the tree \(\mathcal T \in \mathcal T^{\mathcal Q}\) are considered to be \(\varepsilon\)-obstacles and, consequently, which vertices in \(\mathcal G(\mathcal T)\) to avoid, if possible.
Next, we present a numerical example to demonstrate the utility of our approach.

\section{NUMERICAL EXAMPLE} \label{sec:numericalExample}

\begin{figure}[!t]
\centering
\subfloat[]{\includegraphics[width=0.24\textwidth]{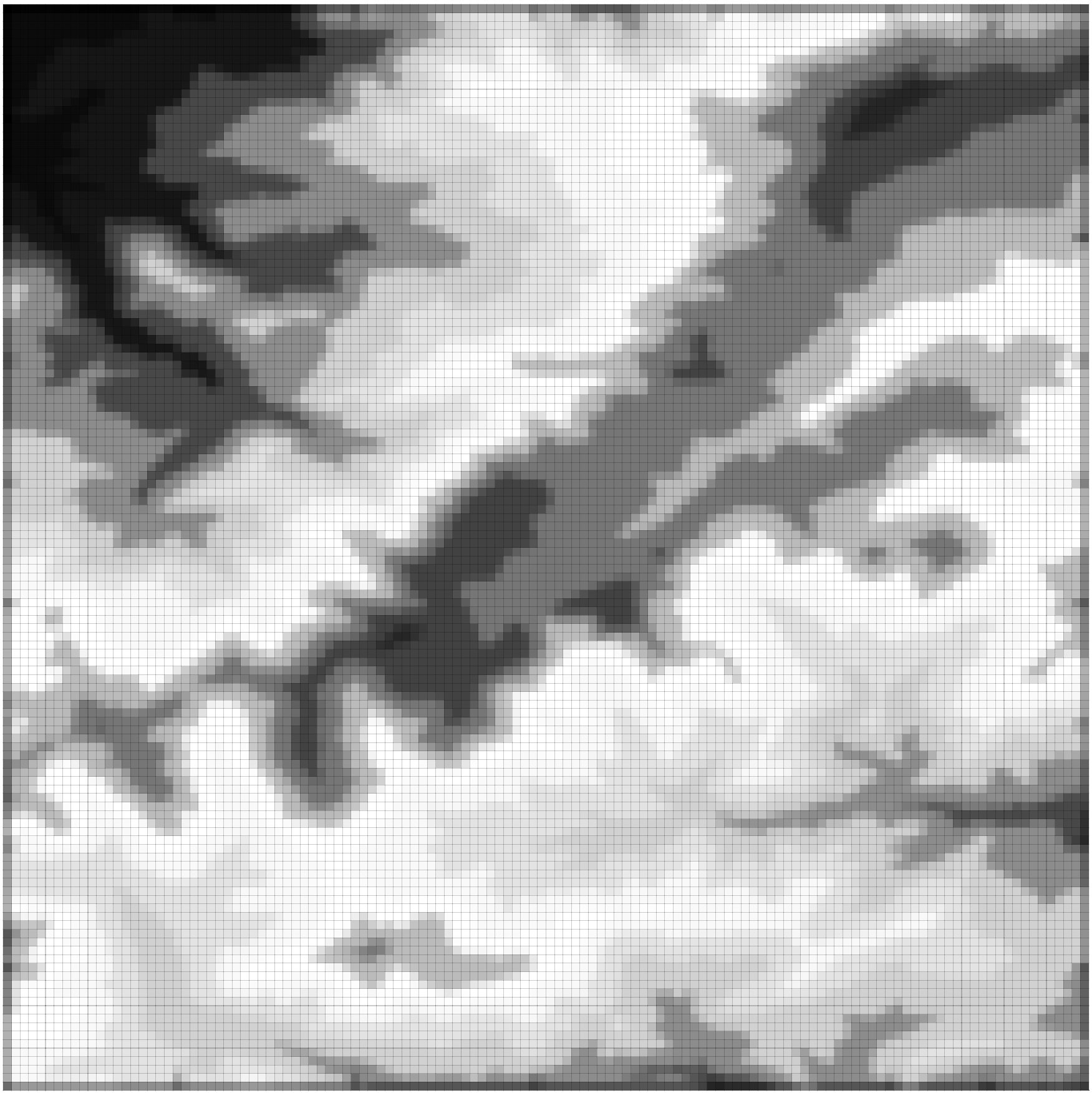}
\label{fig:originalW}}
\subfloat[]{\includegraphics[width=0.24\textwidth]{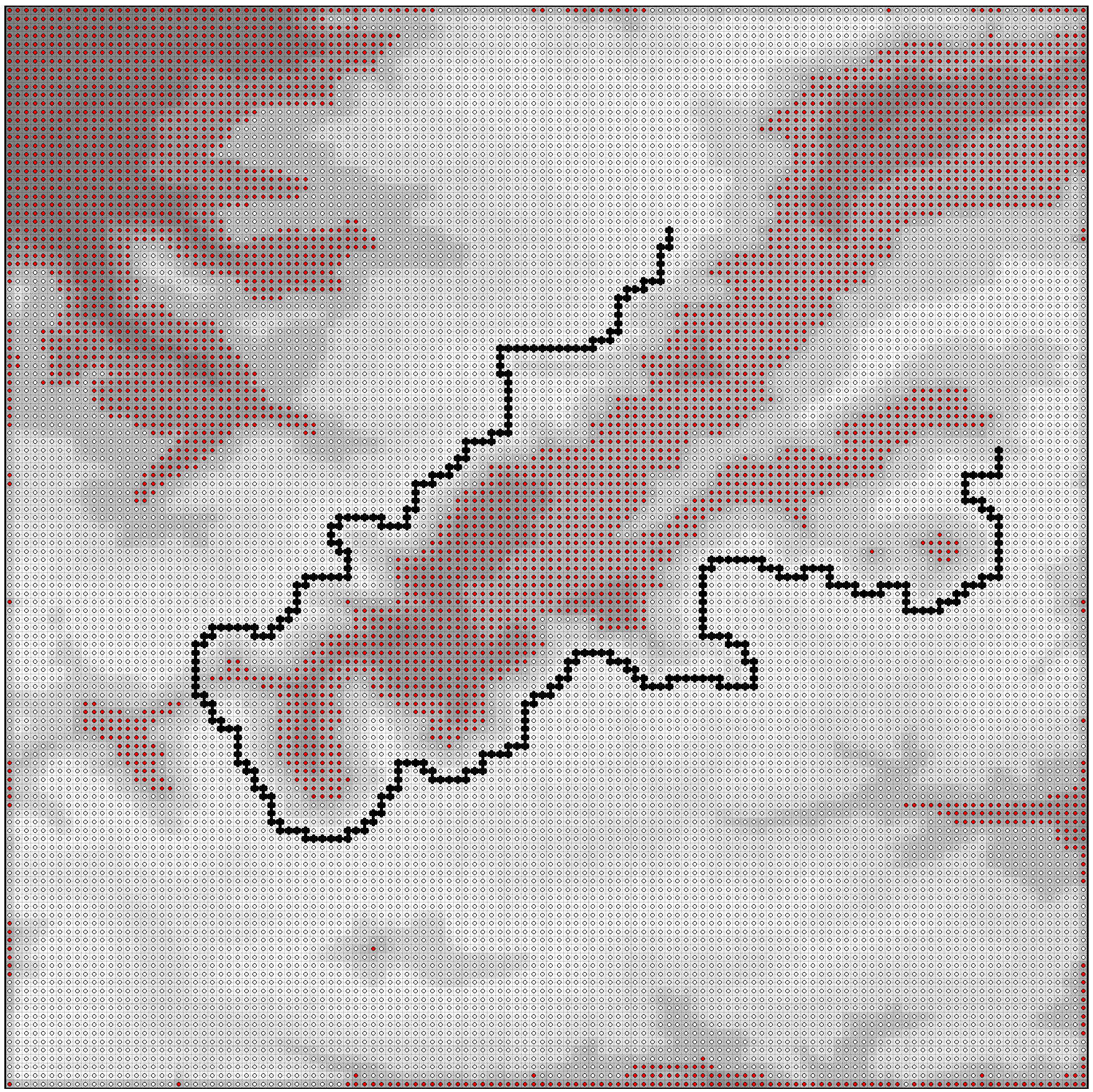}
\label{fig:originalWwithOPTFRP}}
\subfloat[]{\includegraphics[width=0.24\textwidth]{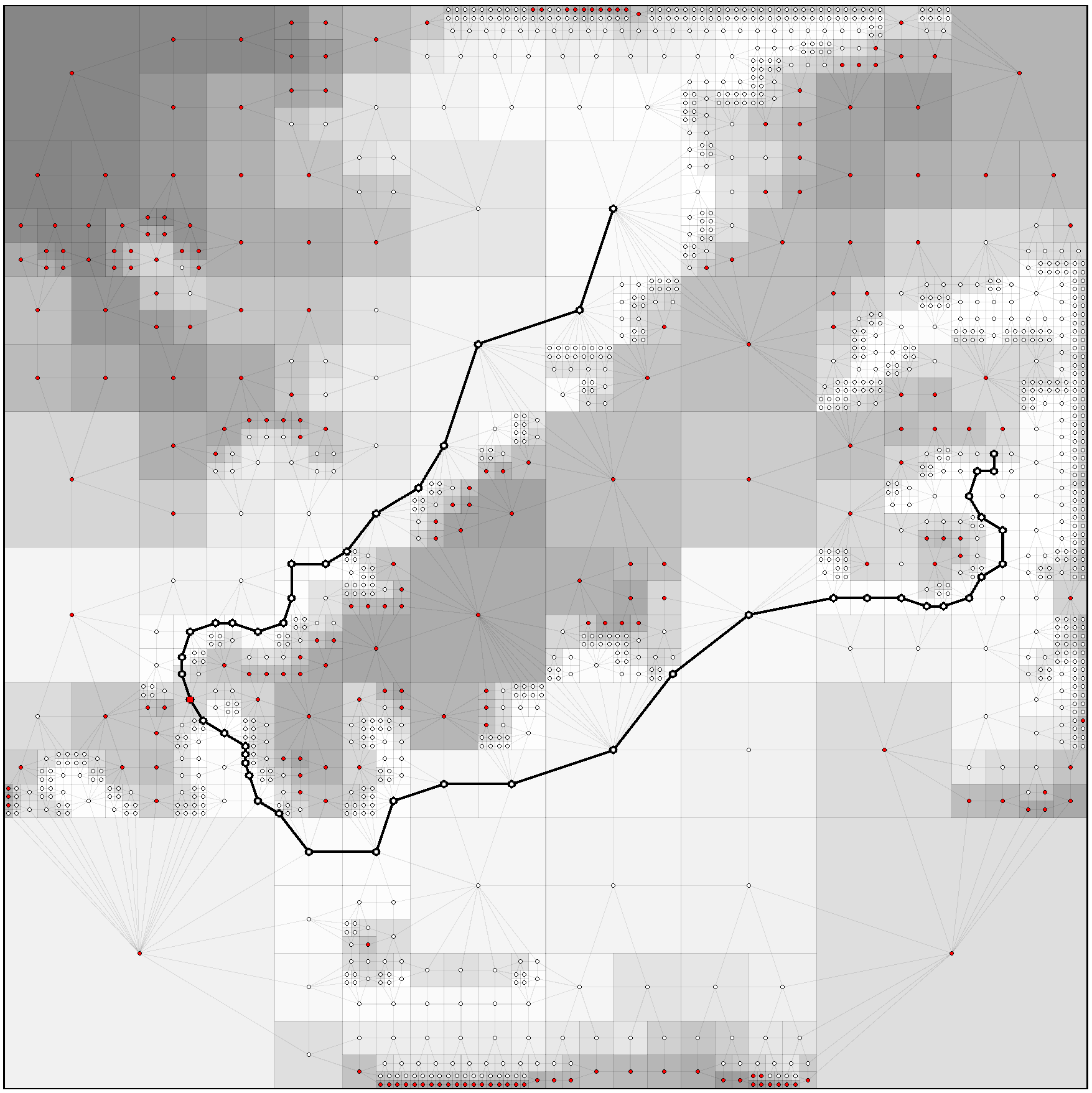}
\label{fig:pathGraph1}}
\subfloat[]{\includegraphics[width=0.24\textwidth]{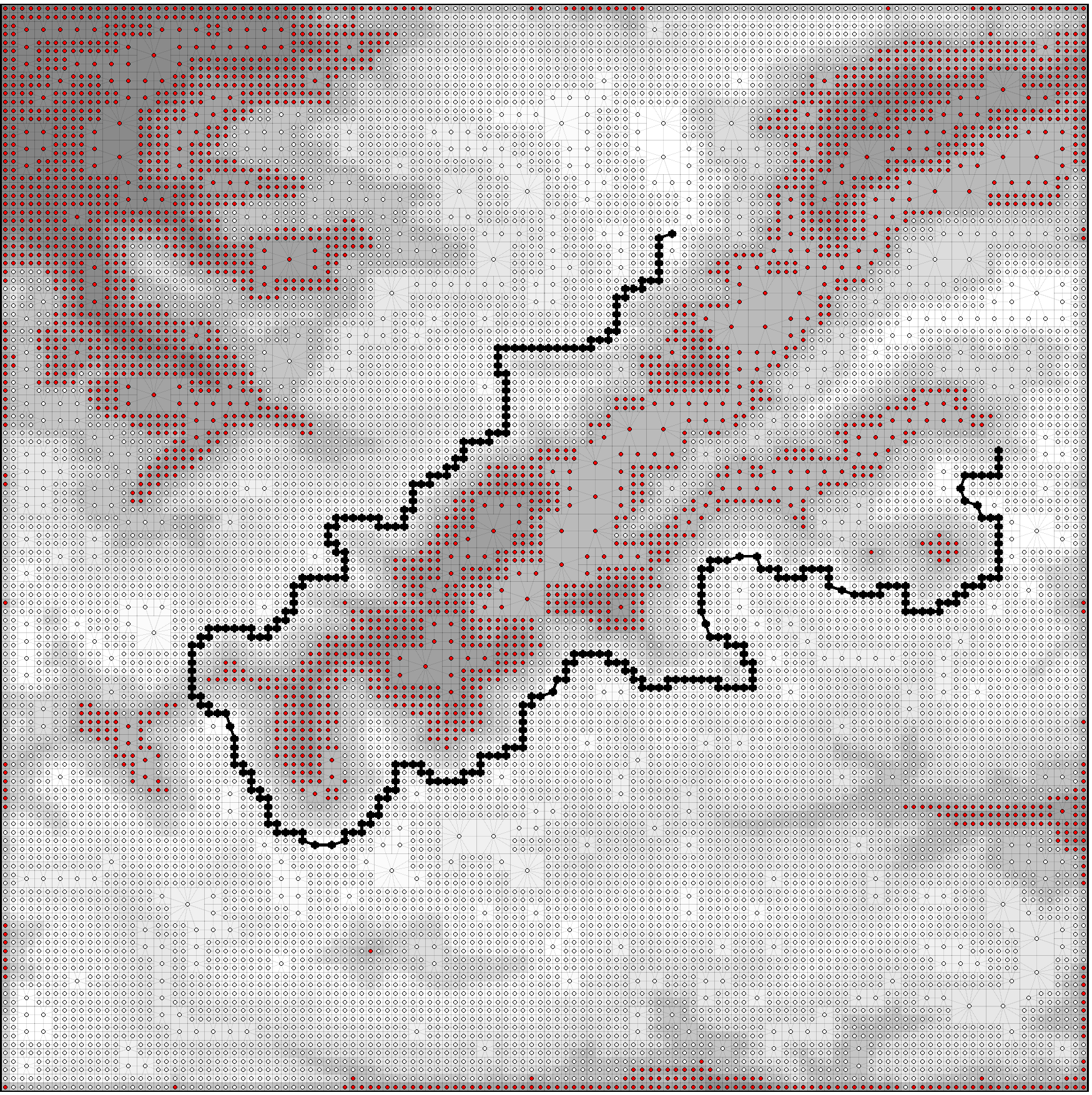}
\label{fig:pathGraph2}}
\caption{128\(\times\)128 environment \(\left( \lvert \Omega_X\rvert = 16384 \right)\) with graph abstraction and path examples for \(\varepsilon = 0.5\). Cost parameters are \(\lambda_1 = 0.001\) and \(\lambda_2 = 1\). Shading of grey scales with probability of occupancy. Red vertices are considered \(\varepsilon\)-obstacles as determined by Proposition \ref{prop:vEpsAndObs}. (a) original environment, (b) example FRP, (c) example AP and graph for \(\beta = 55 ~ \left( \%\lvert \Omega_X \rvert = 8.3\% \right)\), (d) example AP and graph for \(\beta = 1\times10^6 ~\left(\%\lvert \Omega_X \rvert = 83.4\% \right)\).}
\end{figure}

We consider the world \(\mathcal W\) to be given by the \(128 \times 128\) occupancy grid shown in Figure \ref{fig:originalW}.
The OG representation provides information regarding the conditional distribution \(p(y|x)\), whereby we then define the joint distribution \(p(y,x) = p(y|x)p(x)\) with \(p(x) = \nicefrac{1}{\lvert \mathcal N_{\text{leaf}}\left(\mathcal T_{\mathcal W}\right) \rvert}\) for all \(x \in \mathcal N_{\text{leaf}}\left(\mathcal T_{\mathcal W}\right)\).
By utilizing the uniform distribution \(p(x)\), we encode that the autonomous agent is equally likely to occupy any cell \(x \in \mathcal N_{\text{leaf}}\left(\mathcal T_{\mathcal W}\right)\) and will result in the IB method refining the environment in a region-agnostic manner \cite{Larsson2020}.
The joint distribution \(p(x,y)\), along with a sequence of strictly increasing positive values of \(\left\{ \beta_i \right\}_{i=1}^N\) are provided to the IB abstraction framework of Section \ref{sec:ITIB_Abstractions} to obtain the sequence of trees \(\left\{ \mathcal T_{\beta_i} \right\}_{i=1}^{N}\) along with the corresponding \(\left\{ \mathcal G\left( \mathcal T_{\beta_i} \right) \right\}_{i=1}^N\).
Given a start and goal location, the path planning problem \eqref{eq:abstractPathProblem} is then solved on each of the trees in the sequence \(\left\{ \mathcal T_{\beta_i} \right\}_{i=1}^{N}\) to obtain \(\{\hat \pi^*_{\beta_i} \}_{i=1}^N\).
Examples of abstract paths in the sequence \(\{\hat \pi^*_{\beta_i} \}_{i=1}^N\) are shown in Figures \ref{fig:pathGraph1} -- \ref{fig:pathGraph2} with a corresponding FRP shown in Figure \ref{fig:originalWwithOPTFRP}.

Figure \ref{fig:aveCostVsCompression} shows the average path cost ratio as a function of compression when averaging over 200 randomly sampled start and goal locations in the environment shown in Figure \ref{fig:originalW}.
For the averaging results, we ensure that the conditions of Proposition \ref{thm:convQtreeSearch} are satisfied to guarantee the path cost ratio converges to one as \(\beta \to \infty\).

From Figure \ref{fig:aveCostVsCompression}, we see that, on average, roughly \(15\%\) to \(18\%\) of the vertices of \(\mathcal G(\T_\W)\) are required in order to obtain a feasible path in the environment.
A reduced graph with only \(15\%\) to \(18\%\) of the vertices of \(\mathcal G(\T_\W)\) substantially reduces the computational effort required to find feasible solutions.
Figure \ref{fig:aveCostVsCompression} also shows that the path cost monotonically decreases with increased resolution.
The decreasing nature of the cost in Figure \ref{fig:aveCostVsCompression} is expected, since Theorem \ref{thm:monotonicCost} guarantees that the path cost between \emph{any} two points monotonically decreases as a function of resolution (or, equivalently, increased \(\beta > 0\)).
We also observe that the average path cost ratio converges to 1, corroborating the conditions for convergence set forth by Proposition \ref{thm:convQtreeSearch}.
Lastly, note that by utilizing a representation with approximately \(70\%\) of the nodes in \(\T_\W\) results, on average, in an abstract path \(\hat\pi^{*}_{\beta_i}\) for which \(\hat J_{\varepsilon}^{\lambda}(\hat \pi^*_{\beta_i};\beta_{i}) \) is within \(30\%\) of \(J^{\lambda}_{\varepsilon}(\pi^*)\).
We now provide some discussion of how our framework relates to bounded-rational decision making and anytime algorithms.
\begin{figure}[!t]
	\centering
	\includegraphics[width=0.5\textwidth]{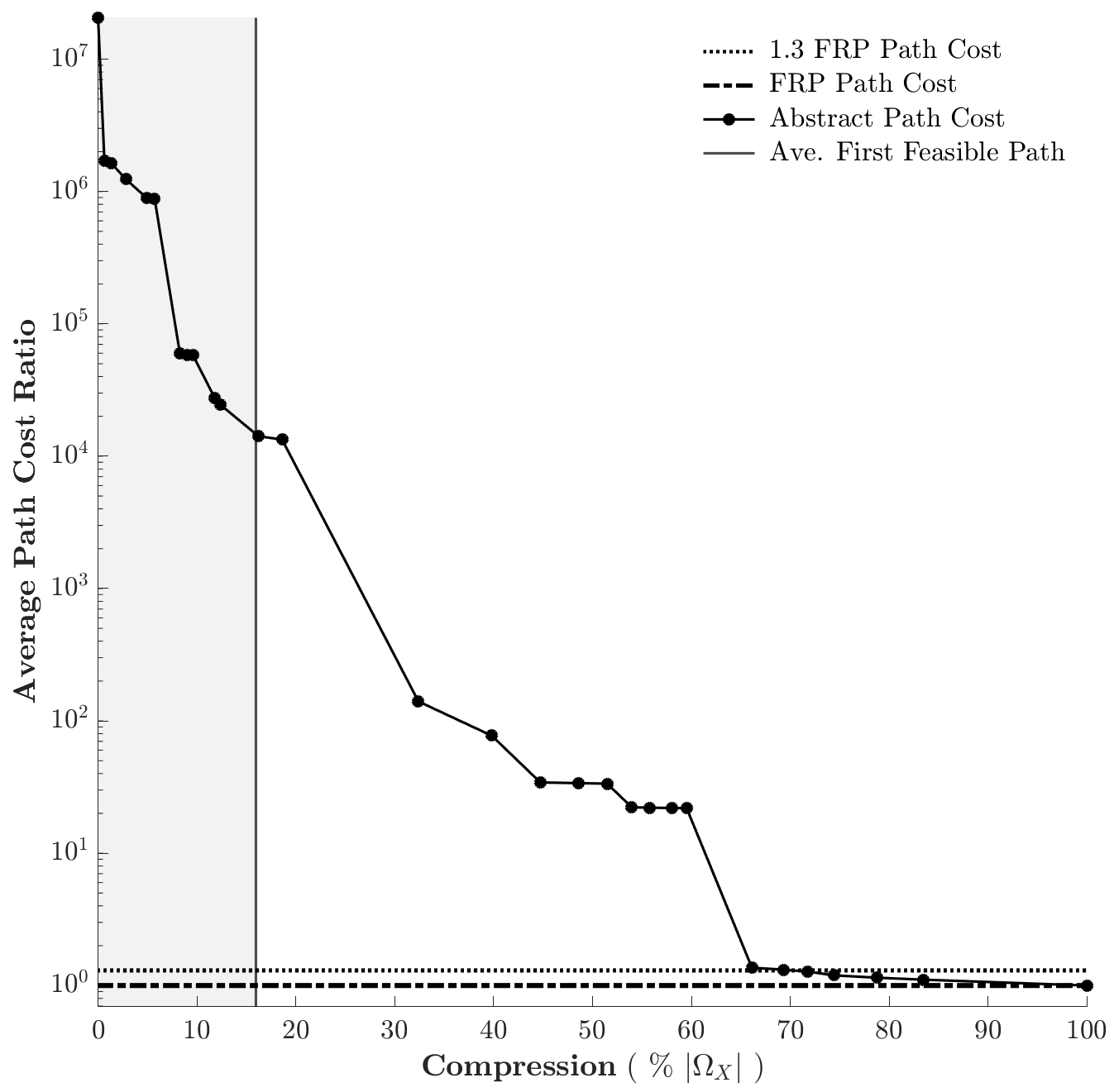}
	\caption{Logarithmic (base 10) value of average \(\nicefrac{\hat{J}^{\lambda}_{\varepsilon}(\hat \pi^*_{\beta_i};\beta_i)}{J_{\varepsilon}^{\lambda}(\pi^*)}\) versus compression for \(\varepsilon = 0.5\), \(\lambda_1 = 0.001\) and \(\lambda_2 = 1\). Note that \(\lvert \Omega_X \rvert = 16384\). Average values computed over 200 randomly sampled start and goal locations. Moving along the curve to the right is done by increasing \(\beta\). Average first feasible path line represents average compression at which first guaranteed feasible path in the abstracted environment is found.}
	\label{fig:aveCostVsCompression}
\end{figure}

\section{DISCUSSION} \label{sec:discussion}
The role of \(\beta > 0\) in our framework can be interpreted similarly to its role in other approaches for resource-constrained and bounded-rational decision making.
In fact, previous works, such as \cite{Larsson2017,Genewein2015,Tishby2010}, that study information-limited decision-making in stochastic domains have viewed the trade-off parameter \(\beta\) as a rationality parameter.
More specifically, these works formulate information-constrained MDPs by adding KL-divergence constraints to traditional MDP problems to constrain by what amount an optimal policy is permitted to differ from a policy provided to the agent beforehand.
In this view, the a priori policy is considered a default policy for which the agent resorts to in case it has no time or ability to discover a possibly better policy.
These studies argue that the resulting optimal policies for rational agents are recovered as \(\beta \to \infty\) and that policies for fully resource-limited agents are found as \(\beta \to 0\), with a spectrum of solutions for intermediate values of \(\beta > 0\).
Similarly, we see that \(\beta\) implicitly trades off the complexity of the search by influencing the number of vertices in the graph \(\mathcal G(\mathcal T_{\beta})\).
Furthermore, \(\beta\) not only trades the complexity of the search, but also changes the value of the resulting optimal path, since larger values of \(\beta\) lead to lower path costs, at the penalty of increased complexity.
In this way, we argue that our framework represents a bridge between bounded rationality and planning, providing a method for which one can trade path-optimality and complexity of the search directly, through a single parameter \(\beta\) and without the need to specify the abstractions a priori.

\begin{figure}[tbh]
	\centering
	\includegraphics[width=0.5\textwidth]{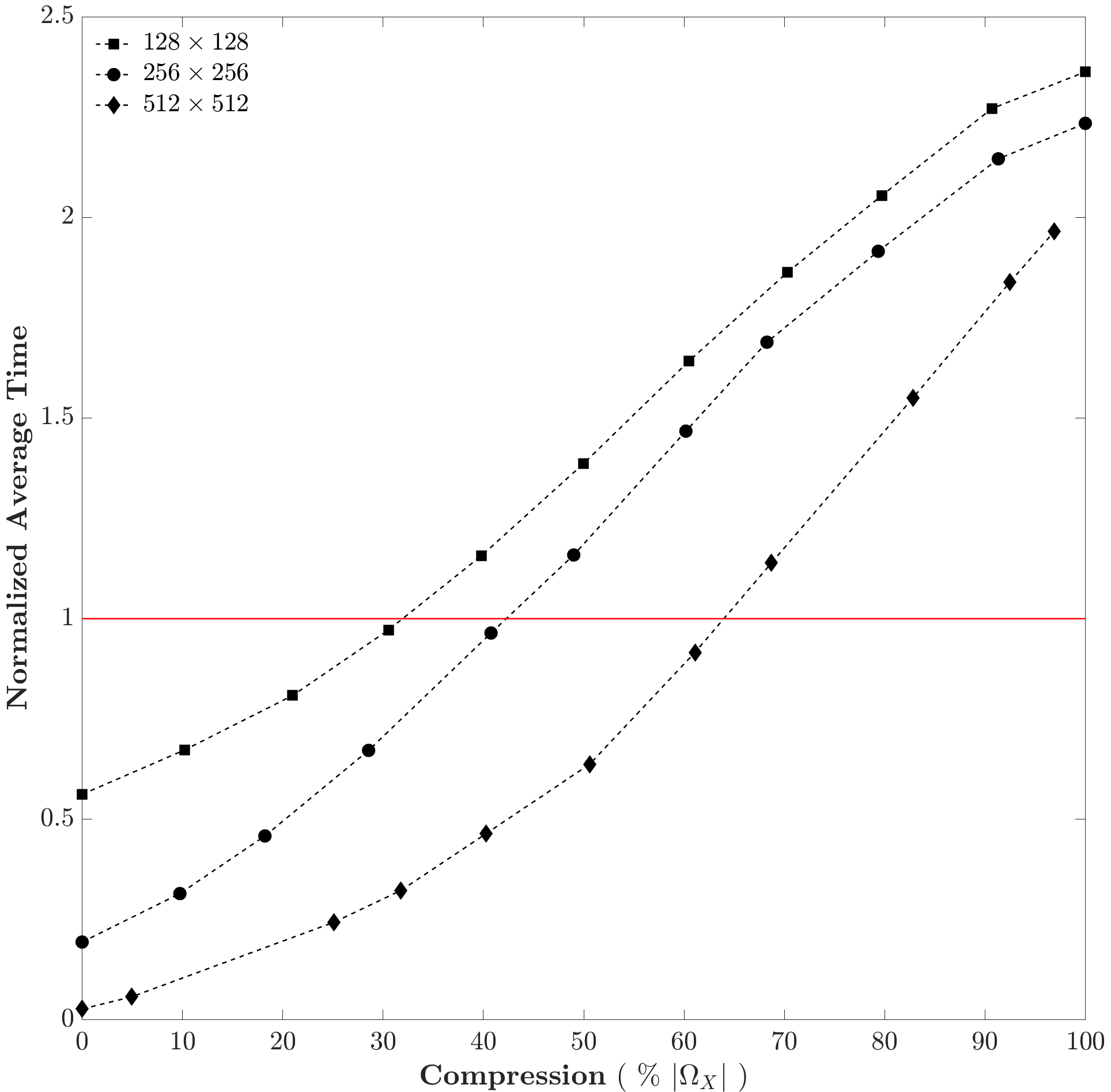}
	\caption{Average computation time \(\text{vs.}\) environment compression for two-dimensional grid sizes of \(128 \times 128\), \(256 \times 256\) and \(512 \times 512\). By increasing \(\beta\) one moves along the curves to the right. Timing results are obtained by averaging over 100 randomly selected start and goal locations. The y-axis represents average (normalized) time obtained by averaging the time for (i) information computation (population of \(\Delta I_Y(\cdot)\) and \(\Delta I_X (\cdot)\) values), (ii) computing Q-values (for each \(\beta\) according to \eqref{eq:nodeWiseQfunction1}-\eqref{eq:nodeWiseQfunction2}), (iii) running Q-tree search and (iv) employing Dijkstra graph-search to obtain an abstract path and normalizing by the average time to run Dijkstra on the finest-resolution map. Timing results assume the worst-case scenario that abstractions are generated from scratch for each value of \(\beta\) (or compression level) on a computer with a \(2.9\) GHz Intel \(\text{i}5\) CPU with \(8\) GB of RAM running MATLAB.}
	\label{fig:aveTimeVsCompression}
\end{figure}

As discussed, a number of other approaches exist that attempt to decrease the planning complexity of the problem by leveraging environment abstraction \cite{Cowlagi2008,Cowlagi2010,Cowlagi2011,Cowlagi2012,Cowlagi2012a,Tsiotras2007,Tsiotras2012,Kambhampati1986,Hauer2015,Hauer2016}.
While these approaches do provide a means to simplify the path-planning problem, they do not guarantee that the solution improves with increased deliberation, or planning, time.
We argue that our framework can provide a bridge between the planning time, the plan complexity and the plan cost by viewing the trade-off parameter \(\beta > 0\) as having a one-to-one correspondence with time \(t > 0\) by, for example, considering a strictly increasing mapping \(\Gamma(t) = \beta\), where \(\Gamma (\cdot)\) maps time \(t > 0\) to trade-off parameter values.
In this way, the IB as well as the planning problem \eqref{eq:abstractPathProblem} become time-dependent, and the improvement of the solution objective value with time is established by Theorem \ref{thm:monotonicCost}.
By viewing the problem in this way, we facilitate connections between planning complexity, optimality, and time, as suggested by anytime algorithms \cite{Dean1988}.

In regards to complexity and planning time, we note that the \(128 \times 128\) two-dimensional grid presented in Section \ref{sec:numericalExample} serves as an example to corroborate the theoretical results developed in this manuscript in a non-trivial setting.
In some real-world cases, it may happen that the computational benefits of leveraging abstractions for the purposes of planning are unlikely to outweigh the cost of executing Dijkstra directly on the finest resolution graph. 
It is known \cite{Larsson2020} that the complexity of the Q-tree search algorithm is \(\mathcal O(\lvert \N_{\text{leaf}}(\T_\W) \rvert )\) and so the worst-case complexity of executing Q-tree search and planning on the resulting compressed representation is \(\mathcal O(\lvert \N_{\text{leaf}}(\T_\W) \rvert) + \mathcal O(\lvert E(\T)\rvert + \lvert \mathcal{V}(\T) \rvert \log \lvert \mathcal{V}(\T) \rvert )\) for any \(\T\in\T^\Q\), as compared to \(\mathcal O(\lvert E(\T_\W)\rvert + \lvert \mathcal{V}(\T_\W) \rvert \log \lvert \mathcal{V}(\T_\W) \rvert )\) when solving the problem on the finest resolution representation.
While this theoretical result is useful, we also provide timing results obtained by executing our algorithm for various two-dimensional grid sizes in Figure \ref{fig:aveTimeVsCompression}.
Figure \ref{fig:aveTimeVsCompression} shows that, for all the two-dimensional grid sizes shown, there is a range for which our approach provides a computational benefit as compared to running Dijkstra on the finest resolution graph.
Moreover, comparing Figures \ref{fig:aveCostVsCompression} and \ref{fig:aveTimeVsCompression}, we see that for the \(128 \times 128\) grid example considered in Section \ref{sec:numericalExample}, a feasible path can be found (which requires approximately \(15\%\) of the nodes of \(\N_{\text{leaf}}(\T_\W)\) on average) at a lower computational cost than executing a search on the finest resolution.
While this path may not be optimal with respect to an FRP, it is computationally cheaper to obtain, and thus an agent requires, on average, less planning time to have a feasible plan as compared to executing a search on the finest resolution graph.
Additionally, for the \(512 \times 512\) two-dimensional grid, we see a computational cost savings by utilizing abstraction and planning up to a compression level of approximately \(65\%\) when comparing against the time required to obtain a finest resolution path, and that, as the number of vertices in \(\mathcal G(\T_\W)\) increases, the range over which there is a computational benefit to employing our approach grows, as evidenced in Figure \ref{fig:aveTimeVsCompression}.
Consequently, we hypothesize that the computational benefits of our approach will only increase as the dimension of the search space grows.
The computational savings come at the cost of a diminished performance as measured by the abstract cost, although this due to the intrinsic need to trade computational complexity and path optimality that we discuss throughout this paper.
Lastly, it is important to view Figure \ref{fig:aveTimeVsCompression} keeping in mind: (i) the results assume the worst-case scenario that the abstractions are not re-used and must be generated from scratch (Q-tree search is initialized at the root node of \(\T_\W\)) and are used to produce a single abstract path (\(\text{i.e.,}\) the abstractions are not re-used for multiple plans on the same abstraction) and (ii) the environment is two-dimensional.

Finally, recall that for a given value of \(\beta > 0\), the abstraction returned by Q-tree search will be a tree that retains the maximum amount of information regarding the relevant variable \(Y\) for a given level of compression.
This process is subject to the choice of the relevant variable, which in this paper was taken to be the cell occupancy.
Importantly, our framework holds for other choices of the relevant variable, albeit the average compression level to find the first feasible solution and the compression level for the average path cost to reach \(30\%\) of the FRP cost, as shown in Figure \ref{fig:aveCostVsCompression}, may change.
Investigating the selection of the relevant random variable and its implications is therefore of interest, and is a topic we leave for future work.

\section{CONCLUSIONS} \label{sec:conclusion}
In this paper, we have shown how a path-planning problem can be systematically simplified through the use of multi-resolution tree abstractions generated by an information-theoretic framework that can be tailored to agent resource limitations.
A number of theoretical results are presented that establish formal connections between the path quality, the graph-search complexity, and the information contained in the reduced graphs.
Our framework provides a principled way to generate abstractions tailored to agent resource constraints, while simultaneously providing guarantees on the monotonic improvement of the path cost as a function of environment resolution.
A non-trivial numerical example is presented to corroborate the theoretical findings and showcase the utility of the approach.
Lastly, we provided a discussion analyzing the interpretation of our framework within the real of bounded-rational decision making and anytime algorithms.


\bibliographystyle{IEEEtran}


\begin{appendices}
%
\section{Proof of Theorem \ref{thm:monotonicCost}} \label{app:thmMonotoneCost}
The proof is given in two parts. 
We first present and provide a proof of the following lemma before providing the proof of Theorem \ref{thm:monotonicCost}.
\begin{lemma} \label{lemma:vProof1}
	Let \(n \in \mathcal N_{\text{int}} \left( \mathcal T_{\mathcal W} \right)\).
	Then \(2^{d r(n)}V^{\lambda}_{\varepsilon}(n) \geq \sum_{n' \in \mathcal C(n) \setminus\mathcal S} 2^{d r(n')}V^{\lambda}_{\varepsilon}(n')\) for all \(\mathcal S \subseteq \mathcal{C}(n)\).
\end{lemma}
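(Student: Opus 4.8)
The plan is to show that the claimed inequality is actually an equality once the set $\mathcal S$ is empty, and then obtain the general case simply by discarding nonnegative terms. The key observation is that the weighting factor $2^{dr(\cdot)}$ is precisely calibrated so that the recursive averaging in the definition of $V^\lambda_\varepsilon$ at interior nodes becomes a conservation identity: the weighted value at a parent equals the sum of the weighted values at its children.

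Concretely, first I would record two structural facts about the quadtree $\mathcal T_{\mathcal W}$. By Definition~\ref{def:nodeHyperCube}, if $n \in \mathcal N_k(\mathcal T_{\mathcal W})$ then every $n' \in \mathcal C(n)$ lies in $\mathcal N_{k+1}(\mathcal T_{\mathcal W})$, so $r(n') = \ell - (k+1) = r(n) - 1$; moreover, since the children's hypercubes partition $H(n)$ and each has half the side length, $\lvert \mathcal C(n)\rvert = 2^d$. Next, since $n \in \mathcal N_{\text{int}}(\mathcal T_{\mathcal W})$, the second branch of \eqref{eq:defOfAbstractV} applies, giving $V^\lambda_\varepsilon(n) = \tfrac{1}{2^d}\sum_{n' \in \mathcal C(n)} V^\lambda_\varepsilon(n')$. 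Multiplying through by $2^{dr(n)}$ and substituting $r(n) = r(n') + 1$ for each child term yields
\begin{equation*}
2^{dr(n)} V^\lambda_\varepsilon(n) = 2^{d(r(n)-1)} \sum_{n' \in \mathcal C(n)} V^\lambda_\varepsilon(n') = \sum_{n' \in \mathcal C(n)} 2^{dr(n')} V^\lambda_\varepsilon(n').
\end{equation*}

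Finally, for any $\mathcal S \subseteq \mathcal C(n)$ I would split the sum on the right as $\sum_{n' \in \mathcal C(n)\setminus \mathcal S} 2^{dr(n')} V^\lambda_\varepsilon(n') + \sum_{n' \in \mathcal S} 2^{dr(n')} V^\lambda_\varepsilon(n')$, and invoke the fact that $V^\lambda_\varepsilon$ takes values in $\Re_+$ (by construction in \eqref{eq:defOfAbstractV}, since $c^\lambda_\varepsilon \geq 0$ and the averaging preserves nonnegativity), so that the second sum is nonnegative. Dropping it gives $2^{dr(n)} V^\lambda_\varepsilon(n) \geq \sum_{n' \in \mathcal C(n)\setminus \mathcal S} 2^{dr(n')} V^\lambda_\varepsilon(n')$, as desired.

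There is no real obstacle here: the entire content is the conservation identity in the displayed equation, which is immediate from the definition of $V^\lambda_\varepsilon$ together with the relation $r(n') = r(n)-1$ for children in a quadtree. The only thing to be careful about is making the dependence on the branching factor $2^d$ explicit so that the $2^{dr}$ weights cancel correctly, and noting nonnegativity of $V^\lambda_\varepsilon$ to justify the final step.
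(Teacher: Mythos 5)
Your proof is correct and rests on exactly the same two ingredients as the paper's: the identity $2^{dr(n)}V^{\lambda}_{\varepsilon}(n)=\sum_{n'\in\mathcal C(n)}2^{dr(n')}V^{\lambda}_{\varepsilon}(n')$ (from the averaging definition of $V^{\lambda}_{\varepsilon}$ and $r(n')=r(n)-1$) together with the nonnegativity of $V^{\lambda}_{\varepsilon}$. The only difference is presentational --- you argue directly by dropping the nonnegative terms indexed by $\mathcal S$, whereas the paper runs the identical manipulation as a proof by contradiction --- so this is essentially the same approach.
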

\vspace{0.1cm}
\begin{proof}
The proof is given by contradiction.
Assume that \(2^{d r(n)}V^{\lambda}_{\varepsilon}(n) < \sum_{n' \in \mathcal C(n) \setminus \mathcal S} 2^{d r(n')}V^{\lambda}_{\varepsilon}(n')\) for some \(\mathcal S \subseteq \mathcal C(n)\).
This implies, since \(n' \in \mathcal C(n)\),
\begin{equation*}
2^{d (r(n')+1) }V^{\lambda}_{\varepsilon}(n) - \sum_{n' \in \mathcal C(n) \setminus \mathcal S } 2^{d r(n')} V^{\lambda}_{\varepsilon}(n') <  0.
\end{equation*}
Then,
\begin{equation*}
2^{d r(n')} \left( 2^d V^{\lambda}_{\varepsilon}(n) - \sum_{n' \in \mathcal C(n) \setminus \mathcal S } V^{\lambda}_{\varepsilon}(n') \right) < 0,
\end{equation*}
as, for all \(n' \in \mathcal C(n)\), \(2^{d r(n')}\) is a constant.
Furthermore, since \(2^{d r(n')} > 0\), we have
\begin{equation*}
2^d V^{\lambda}_{\varepsilon}(n) - \sum_{n' \in \mathcal C(n) \setminus \mathcal S} V^{\lambda}_{\varepsilon}(n') < 0,
\end{equation*}
which gives
\begin{equation*}
V^{\lambda}_{\varepsilon}(n) < 2^{-d}\sum_{n' \in \mathcal C(n) \setminus \mathcal S} V^{\lambda}_{\varepsilon}(n'),
\end{equation*}
for some \(\mathcal S \subseteq \mathcal C(n)\).
However, as \(0 \leq V^{\lambda}_{\varepsilon}(n')\) for all \(n' \in \mathcal C(n)\), we have that 
\begin{equation*}
V^{\lambda}_{\varepsilon}(n) < 2^{-d}\sum_{n' \in \mathcal C(n) \setminus \mathcal S} V^{\lambda}_{\varepsilon}(n') \leq 2^{-d}\sum_{n' \in \mathcal C(n) } V^{\lambda}_{\varepsilon}(n') = V^{\lambda}_{\varepsilon}(n),
\end{equation*}
leading to a contradiction.
\end{proof}

\vspace{0.2cm}
\noindent We now prove Theorem \ref{thm:monotonicCost}.
\vspace{0.2cm}

\begin{proof}
The proof is given by construction.
There are two cases to consider: \(\hat \pi^*_{\beta_1} \cap \{ n \} = \varnothing\), and \(\hat \pi^*_{\beta_1} \cap \{ n \} \neq \varnothing\).

\vspace{0.3cm}

\noindent
First consider the case \(\hat \pi^*_{\beta_1} \cap \{ n \} = \emptyset\).  
It follows,
\begin{equation*}
\hat \pi^*_{\beta_1} \subseteq \mathcal N_{\text{leaf}}\left(\mathcal T_{\beta_1}\right) \cap \mathcal N_{\text{leaf}}\left(\mathcal T_{\beta_2}\right) \subset \mathcal N_{\text{leaf}}\left( \mathcal T_{\beta_2} \right),
\end{equation*}
and thus \(\hat \pi^*_{\beta_1} \subset \mathcal N_{\text{leaf}}\left( \mathcal T_{\beta_2} \right)\).
Take \(\hat \pi_{\beta_2} = \hat \pi^*_{\beta_1}\) and consequently \(\hat J^{\lambda}_{\varepsilon}(\hat \pi^*_{\beta_1};\beta_1) = \hat J^{\lambda}_{\varepsilon}(\hat \pi_{\beta_2};\beta_2)\).
Hence, there exists a path \(\hat \pi_{\beta_2} \subseteq \mathcal N_{\text{leaf}}\left(\mathcal T_{\beta_2}\right)\) such that \(\hat J^{\lambda}_{\varepsilon}(\hat \pi^*_{\beta_1};\beta_1) \geq \hat J^{\lambda}_{\varepsilon}(\hat \pi_{\beta_2};\beta_2)\).

\vspace{0.3cm}

\noindent
Now consider \(\hat \pi^*_{\beta_1} \cap \{ n \} \neq \emptyset\).
In this case, without loss of generality, \(\hat \pi^*_{\beta_1} = \{z_0,\ldots, z_{i-1}, z_{i}, z_{i+1},\ldots, z_R\} \subseteq \N_{\text{leaf}}(\T_{\beta_1})\) is an abstract path where \(z_i = n\).
Since the node \(n\) is expanded, we re-route the path through the children of \(n\).
To this end, consider \(\hat \pi_{\beta_2} = \{z_0,\ldots, z_{i-1}, z'_{i_1},\ldots,z'_{i_u}, z_{i+1},\ldots, z_R\}\), where \(\{z'_{i_1},\ldots,z'_{i_u}\} \subseteq \C(n)\) is a sequence of nodes so as to render \(\hat \pi_{\beta_2}\) an abstract path.
Notice that \(\hat \pi_{\beta_2} \subseteq \N_{\text{leaf}}(\T_{\beta_2})\), and 
\begin{align*}
\hat J^{\lambda}_{\varepsilon}(\hat \pi^*_{\beta_1} ;\beta_1 ) - \hat J^{\lambda}_{\varepsilon}(\hat \pi_{\beta_2};\beta_2 ) &= \sum_{z \in \hat \pi^*_{\beta_1}}2^{d r(z)}V^{\lambda}_{\varepsilon}(z) - \sum_{z \in \hat \pi_{\beta_2}} 2^{d r(z)}V^{\lambda}_{\varepsilon}(z), \\
&= 2^{d r(n)}V^{\lambda}_{\varepsilon}(n) - \sum_{n' \in \{z'_{i_1},\ldots,z'_{i_u}\}}2^{d r(n')}V^{\lambda}_{\varepsilon}(n').
\end{align*}
Since \(\{z'_{i_1},\ldots,z'_{i_u}\} \subseteq \C(n)\), it follows from Lemma \ref{lemma:vProof1} that 
\begin{equation*}
2^{d r(n)}V^{\lambda}_{\varepsilon}(n) - \sum_{n' \in \{z'_{i_1},\ldots,z'_{i_u}\}}2^{d r(n')}V^{\lambda}_{\varepsilon}(n') \geq 0,
\end{equation*}
and thus \(\hat J^{\lambda}_{\varepsilon}(\hat \pi^*_{\beta_1} ;\beta_1 ) \geq \hat J^{\lambda}_{\varepsilon}\left(\hat \pi_{\beta_2};\beta_2 \right)\).
Therefore, there exists a path \(\hat \pi_{\beta_2} \subseteq \mathcal N_{\text{leaf}}\left( \mathcal T_{\beta_2} \right)\) such that \(\hat J^{\lambda}_{\varepsilon}(\hat \pi^*_{\beta_1} ;\beta_1 ) \geq \hat J^{\lambda}_{\varepsilon}(\hat \pi_{\beta_2};\beta_2 )\). 
\end{proof}
%

\section{Proof of Proposition \ref{thm:convQtreeSearch}} \label{app:proofThmConvQtreeSearch}
The proof of Proposition \ref{thm:convQtreeSearch} depends on a number of results that must be established before we prove the proposition.
In what follows, we define the function \(\tilde P_Y: \N(\T_\W) \times \Re_{++} \to \Re\) as
\begin{equation*}
\tilde P_Y(n;\beta) = 
\begin{cases}
0, & \text{if } n \in \N_{\text{leaf}}(\T_\W), \\
\Delta \tilde L_Y(n;\beta) + \sum_{n'\in \C(n)} \tilde P_Y(n';\beta), & \text{ otherwise}.
\end{cases}
\end{equation*}
We now present the following results, which are used to prove Proposition \ref{thm:convQtreeSearch}.

\begin{lemma} \label{lem:limitPandRelationToDeltaL}
Let \(n \in \Nint(\T_\W)\) and \(\beta > 0\).
Then \(\tilde P_Y(n;\beta) = \sum_{s \in \Nint(\T_{\W(n)})} \Delta \tilde L_Y(s;\beta)\) and \(\underset{\beta \to \infty}{\lim}\tilde P_Y(n;\beta) = \sum_{s \in \Nint(\T_{\W(n)})} \Delta I_Y(s)\).
\end{lemma}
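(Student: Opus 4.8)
The plan is to prove the two assertions of Lemma~\ref{lem:limitPandRelationToDeltaL} in sequence, since the second follows from the first together with the convergence behavior of \(\Delta\tilde L_Y(\cdot;\beta)\) as \(\beta\to\infty\). Both will be handled by structural induction on the subtree rooted at \(n\).

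\medskip
\noindent\textbf{Step 1: the identity \(\tilde P_Y(n;\beta) = \sum_{s \in \Nint(\T_{\W(n)})} \Delta \tilde L_Y(s;\beta)\).}
I would argue by induction on the height of \(n\) in \(\T_\W\) (equivalently, on \(r(n)\)). For the base case, take \(n \in \Nint(\T_\W)\) with \(r(n) = 1\), so that every child \(n' \in \C(n)\) is a leaf of \(\T_\W\); then \(\tilde P_Y(n';\beta) = 0\) for each such \(n'\) by definition of \(\tilde P_Y\), hence \(\tilde P_Y(n;\beta) = \Delta\tilde L_Y(n;\beta)\), and \(\Nint(\T_{\W(n)}) = \{n\}\), so the identity holds. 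For the inductive step, suppose the identity holds for all interior nodes of smaller height. Given \(n \in \Nint(\T_\W)\), unfold the recursive definition:
\begin{equation*}
\tilde P_Y(n;\beta) = \Delta\tilde L_Y(n;\beta) + \sum_{n' \in \C(n)} \tilde P_Y(n';\beta).
\end{equation*}
Split the sum over \(\C(n)\) into children that are interior nodes and children that are leaves; for leaf children \(\tilde P_Y\) vanishes, and for interior children the inductive hypothesis gives \(\tilde P_Y(n';\beta) = \sum_{s \in \Nint(\T_{\W(n')})}\Delta\tilde L_Y(s;\beta)\). Since the interior nodes of \(\T_{\W(n)}\) are exactly \(n\) together with the disjoint union over \(n' \in \C(n)\) of the interior nodes of \(\T_{\W(n')}\) (using Definition~\ref{def:treeLeaf} to see that the subtrees rooted at distinct children partition, along with \(n\), the node set of \(\T_{\W(n)}\)), collecting terms yields the claimed identity.

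\medskip
\noindent\textbf{Step 2: the limit \(\lim_{\beta\to\infty}\tilde P_Y(n;\beta) = \sum_{s \in \Nint(\T_{\W(n)})} \Delta I_Y(s)\).}
By Step~1 it suffices to pass to the limit termwise in the finite sum \(\sum_{s\in\Nint(\T_{\W(n)})}\Delta\tilde L_Y(s;\beta)\). Recall from \eqref{eq:node-wiseDeltaL} that \(\Delta\tilde L_Y(s;\beta) = \Delta I_Y(s) - \tfrac{1}{\beta}\Delta I_X(s)\), where \(\Delta I_X(s)\) is a fixed finite nonnegative quantity independent of \(\beta\). Hence for each fixed \(s\), \(\lim_{\beta\to\infty}\Delta\tilde L_Y(s;\beta) = \Delta I_Y(s)\). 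Since \(\Nint(\T_{\W(n)})\) is a finite set, the limit of the finite sum is the sum of the limits, giving \(\lim_{\beta\to\infty}\tilde P_Y(n;\beta) = \sum_{s\in\Nint(\T_{\W(n)})}\Delta I_Y(s)\).

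\medskip
\noindent The routine part is the bookkeeping in Step~1 that the interior nodes of the subtrees rooted at the children of \(n\), together with \(n\) itself, partition \(\Nint(\T_{\W(n)})\); this is an immediate consequence of the subtree definition but must be stated carefully so the index-set manipulation in the inductive step is justified. I do not anticipate a genuine obstacle here — the only mild subtlety is keeping straight which children are leaves versus interior so that the vanishing of \(\tilde P_Y\) on leaves is applied correctly, and that the set \(\Nint(\T_{\W(n)})\) is finite (which holds because \(\T_\W\) is finite) so that the termwise limit in Step~2 is legitimate.
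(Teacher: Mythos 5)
Your proposal is correct and follows essentially the same route as the paper's proof: structural induction on depth/height for the identity, using the decomposition \(\Nint(\T_{\W(n)}) = \{n\} \cup \bigcup_{n'\in\C(n)}\Nint(\T_{\W(n')})\), followed by a termwise limit over the finite index set using \(\Delta\tilde L_Y(s;\beta)\to\Delta I_Y(s)\). The only cosmetic difference is that you explicitly separate leaf children from interior children, whereas the paper absorbs the leaf case into the induction hypothesis via the empty sum; both are fine.
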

\begin{proof}
The proof is given by induction.
Consider any \(n \in \N_{\ell-1}(\T_\W)\) and notice that for all \(n \in \N_{\ell-1}(\T_\W)\), \(\Nint(\T_{\W(n)}) = \{n\}\).
Therefore, from the definition of \(\tilde P_Y\), we have
\begin{equation*}
\tilde P_Y(n;\beta) = \Delta \tilde L_Y(n;\beta) = \sum_{s \in \Nint(\T_{\W(n)})}\Delta \tilde L_Y(s;\beta).
\end{equation*}
Assume the lemma holds for all \(n' \in \N_{k+1}(\T_\W)\) and consider any \(n \in \N_{k}(\T_\W)\) for \(k \in \{0,\ldots,\ell-2\}\).
Then, from definition of \(\tilde P_Y\) and the induction hypothesis,
\begin{align*}
\tilde P_Y(n;\beta) &= \Delta \tilde L_Y(n;\beta) + \sum_{n' \in \C(n)} \tilde P_Y(n';\beta), \\
&= \Delta \tilde L_Y(n;\beta) + \sum_{n' \in \C(n)} \sum_{s \in  \Nint(\T_{\W(n')})}\Delta \tilde L_Y(s;\beta), \\
&= \Delta \tilde L_Y(n;\beta) + \sum_{s \in  \underset{{n' \in \C(n) }}{\bigcup}\Nint(\T_{\W(n')})}\Delta \tilde L_Y(s;\beta).
\end{align*}
Now, notice that \(\Nint(\T_{\W(n)}) = \{n \} \cup \left(\underset{n'\in\C(n)}{\bigcup} \Nint(\T_{\W(n')}) \right)\), and consequently,
\begin{align*}
\tilde P_Y(n;\beta) &=\Delta \tilde L_Y(n;\beta) + \sum_{s \in  \underset{n' \in \C(n)}{\bigcup}\Nint(\T_{\W(n')})}\Delta \tilde L_Y(s;\beta),\\
&= \sum_{s \in  \Nint(\T_{\W(n)})}\Delta \tilde L_Y(s;\beta).
\end{align*}
This proves the first part of the proposition.

\vspace{0.3cm}

\noindent
Now, to evaluate the limit as \(\beta \to \infty\), we make note that, from the definition of \(\Delta \tilde L_Y(\cdot;\beta)\), 
\begin{equation*}
\lim_{\beta \to \infty}\Delta \tilde L_Y(n;\beta) = \Delta I_Y(n),
\end{equation*}
for all \(n \in \Nint(\T_\W)\).
Therefore \(\underset{\beta \to \infty}{\lim}\Delta \tilde L_Y(n;\beta)\) exists for all \(n \in \Nint(\T_\W)\), and
\begin{equation*}
\lim_{\beta \to \infty} \tilde P_Y(n;\beta) = \sum_{s \in  \Nint(\T_{\W(n)})} \lim_{\beta \to \infty} \Delta \tilde L_Y(s;\beta) = \sum_{s \in  \Nint(\T_{\W(n)})} \Delta I_Y(s),
\end{equation*}
where \(\Nint(\T_{\W(n)})\) is a finite set for all \(n \in \Nint(\T_\W)\).
Thus, we have established \(\lim_{\beta \to \infty}\tilde P(n;\beta) = \sum_{s \in  \Nint(\T_{\W(n)})} \Delta I_Y(s)\) for all \(n \in \Nint(\T_\W)\).
\end{proof}

\begin{fact} \label{fct:PlowerboundQ}
Let \(n \in \Nint(\T_\W)\) and \(\beta > 0\).
Then \(\tilde P_Y(n;\beta) \leq \tilde Q_Y(n;\beta)\).
\end{fact}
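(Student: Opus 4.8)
The plan is to prove the inequality by induction on the height of the subtree $\T_{\W(n)}$, which in $\T_\W$ coincides with $r(n)$. It is convenient to prove the slightly stronger statement that $\tilde P_Y(n;\beta) \le \tilde Q_Y(n;\beta)$ holds for \emph{every} $n \in \N(\T_\W)$, not only for interior nodes, so that the children appearing in the recursions for $\tilde P_Y$ and $\tilde Q_Y$ are always covered by the hypothesis.

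First I would dispatch the base case: if $n \in \N_{\text{leaf}}(\T_\W)$ (subtree height $0$, i.e.\ $r(n)=0$), then by definition $\tilde P_Y(n;\beta) = 0 = \tilde Q_Y(n;\beta)$, so the claim holds with equality. For the inductive step, let $n \in \Nint(\T_\W)$ and assume the claim for all nodes of strictly smaller subtree height, in particular for each $n' \in \C(n)$. The key observation is that, because $\tilde Q_Y(n;\beta)$ is defined as a maximum with $0$, it satisfies the two elementary bounds $\tilde Q_Y(n;\beta) \ge 0$ and $\tilde Q_Y(n;\beta) \ge \Delta \tilde L_Y(n;\beta) + \sum_{n'\in\C(n)}\tilde Q_Y(n';\beta)$. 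Combining the recursive definition of $\tilde P_Y$ at the interior node $n$, the induction hypothesis applied termwise to the children, and the second bound above yields
\[
\tilde P_Y(n;\beta) \;=\; \Delta \tilde L_Y(n;\beta) + \sum_{n'\in\C(n)} \tilde P_Y(n';\beta) \;\le\; \Delta \tilde L_Y(n;\beta) + \sum_{n'\in\C(n)} \tilde Q_Y(n';\beta) \;\le\; \tilde Q_Y(n;\beta),
\]
which closes the induction and gives the assertion for every $n \in \Nint(\T_\W)$ in particular.

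I do not anticipate a genuine obstacle here: the argument is a routine backward induction over the quadtree, and the only point requiring any care is the bookkeeping — namely, phrasing the inductive claim over all of $\N(\T_\W)$ rather than over $\Nint(\T_\W)$, so that the leaf children of a depth-$(\ell-1)$ node (where both $\tilde P_Y$ and $\tilde Q_Y$ vanish) are handled uniformly by the base case. Everything else follows from the defining relations and the nonnegativity built into the $\max$ in $\tilde Q_Y$.
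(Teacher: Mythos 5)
Your proof is correct and follows essentially the same route as the paper's: backward induction over the tree using the recursive definitions of $\tilde P_Y$ and $\tilde Q_Y$, with the induction hypothesis applied termwise to the children and the $\max$ in $\tilde Q_Y$ supplying the final inequality. The only difference is cosmetic — you extend the claim to all of $\N(\T_\W)$ with leaves as the base case, whereas the paper starts the induction at $\N_{\ell-1}(\T_\W)$ and inducts on depth; both handle the leaf children identically.
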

\begin{proof}
The proof is given by induction.
Consider any \(n \in \N_{\ell-1} (\T_\W)\).
From the definition of \(\tilde P_Y(\cdot;\beta)\), we have
\begin{equation*}
\tilde P_Y(n;\beta) = \Delta \tilde L_Y(n;\beta) \leq \max\{\Delta \tilde L_Y(n;\beta), ~0 \} = \tilde Q_Y(n;\beta).
\end{equation*}
Assume the result holds for all \(n' \in \N_{k+1}(\T_\W)\) and consider any \(n \in \N_k(\T_\W)\) for \(k \in \{0,\ldots,\ell-2\}\).
Then,
\begin{equation*}
\tilde P_Y(n;\beta) = \Delta \tilde L_Y(n;\beta) + \sum_{n' \in \C(n)} \tilde P_Y(n';\beta) \leq \Delta \tilde L_Y(n;\beta) + \sum_{n' \in \C(n)} \tilde Q_Y(n';\beta),
\end{equation*}
where the inequality follows from the induction hypothesis as \(\C(n) \subseteq \N_{k+1}(\T_\W)\).
Consequently,
\begin{equation*}
\tilde P_Y(n;\beta) \leq \Delta \tilde L_Y(n;\beta) + \sum_{n' \in \C(n)} \tilde Q_Y(n';\beta) \leq \max\left \{ \Delta \tilde L_Y(n;\beta) + \sum_{n' \in \C(n)} \tilde Q_Y(n';\beta),~0\right \} = \tilde Q_Y(n;\beta),
\end{equation*}
establishing \(\tilde P_Y(n;\beta) \leq \tilde Q_Y(n;\beta)\) for all \(n \in \Nint(\T_\W)\).
\end{proof}

\begin{lemma} \label{lem:QupperBound}
Let \(n \in \Nint(\T_\W)\) and \(\beta > 0\).
Then \(\tilde Q_Y(n;\beta) \leq \sum_{s \in \Nint(\T_{\W(n)})} \Delta I_Y(s)\).
\end{lemma}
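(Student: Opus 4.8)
The plan is to prove the bound by downward induction on the depth of $n$ in $\T_\W$ (from depth $\ell-1$ up to the root), mirroring the induction used in the proof of Fact \ref{fct:PlowerboundQ}. The argument rests on two observations: (i) the pointwise inequality $\Delta \tilde L_Y(s;\beta) = \Delta I_Y(s) - \tfrac{1}{\beta}\Delta I_X(s) \leq \Delta I_Y(s)$, valid because $\Delta I_X(s) \geq 0$ and $\beta > 0$; and (ii) the disjoint recursive decomposition $\Nint(\T_{\W(n)}) = \{n\} \cup \bigcup_{n' \in \C(n)} \Nint(\T_{\W(n')})$ of the interior-node set of a subtree, which is the same identity already used in the proof of Lemma \ref{lem:limitPandRelationToDeltaL}.

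For the base case, take $n \in \N_{\ell-1}(\T_\W)$. Every child of $n$ lies at depth $\ell$ and is therefore a leaf of $\T_\W$, so $\tilde Q_Y(n';\beta) = 0$ for all $n' \in \C(n)$, while $\Nint(\T_{\W(n)}) = \{n\}$. Hence $\tilde Q_Y(n;\beta) = \max\{\Delta \tilde L_Y(n;\beta),\,0\}$, and since $\Delta \tilde L_Y(n;\beta) \leq \Delta I_Y(n)$ and $\Delta I_Y(n) \geq 0$, this maximum is at most $\Delta I_Y(n) = \sum_{s \in \Nint(\T_{\W(n)})} \Delta I_Y(s)$.

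For the inductive step, assume the claim holds for every node at depth $k+1$ and let $n \in \N_k(\T_\W)$ with $k \in \{0,\ldots,\ell-2\}$. For each $n' \in \C(n)$, either $n' \in \Nint(\T_\W)$, in which case the induction hypothesis gives $\tilde Q_Y(n';\beta) \leq \sum_{s \in \Nint(\T_{\W(n')})} \Delta I_Y(s)$, or $n'$ is a leaf, in which case $\tilde Q_Y(n';\beta) = 0$ and $\Nint(\T_{\W(n')}) = \emptyset$, so the inequality again holds. Summing over $\C(n)$, adding $\Delta \tilde L_Y(n;\beta) \leq \Delta I_Y(n)$, and collapsing the result with the disjoint decomposition of $\Nint(\T_{\W(n)})$ gives
\[
\Delta \tilde L_Y(n;\beta) + \sum_{n' \in \C(n)} \tilde Q_Y(n';\beta) \;\leq\; \sum_{s \in \Nint(\T_{\W(n)})} \Delta I_Y(s).
\]
Since $\Delta I_Y$ is nonnegative, the right-hand side is $\geq 0$, so it also bounds $0$; taking the maximum with $0$ in the definition of $\tilde Q_Y(n;\beta)$ therefore preserves the inequality, which closes the induction.

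I do not anticipate a real obstacle: the only mild care needed is to treat leaf children and interior children of $n$ uniformly so that the sum over $\C(n)$ telescopes into the subtree sum, and to use nonnegativity of $\Delta I_Y$ precisely where the $\max\{\cdot,0\}$ in the definition of $\tilde Q_Y$ could otherwise break the bound. No results beyond the definitions of $\tilde Q_Y$ and $\Delta \tilde L_Y$ and the nonnegativity of $\Delta I_X$ and $\Delta I_Y$ are required.
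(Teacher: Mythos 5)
Your proof is correct and follows essentially the same induction as the paper's: base case at depth $\ell-1$ using $\Delta \tilde L_Y(n;\beta) \leq \Delta I_Y(n)$, inductive step via the disjoint decomposition of $\Nint(\T_{\W(n)})$, and nonnegativity of $\Delta I_Y$ to absorb the $\max\{\cdot,0\}$. The only (harmless) organizational difference is that you induct directly on the bound for $\tilde Q_Y$ itself, whereas the paper first establishes the bound for the pre-max quantity $\Delta \tilde L_Y(n;\beta) + \sum_{n'\in\C(n)}\tilde Q_Y(n';\beta)$ and handles children with zero Q-value via the auxiliary set $\mathcal R$, a case split your formulation avoids.
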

\begin{proof}
We will first prove that \(\Delta \tilde L_Y(n;\beta) + \sum_{n'\in\C(n)} \tilde Q_Y(n';\beta) \leq \sum_{s \in \Nint(\T_{\W(n)})}\Delta I_Y(s)\) for all \(n \in \Nint(\T_\W)\) and \(\beta > 0\).

\vspace{0.3cm}

\noindent
The proof is given by induction.
Consider any \(n \in \N_{\ell-1}(\T_\W)\).
Then, since \(\C(n) \subseteq \N_{\text{leaf}}(\T_\W)\), 
\begin{align*}
\Delta \tilde L_Y(n;\beta) + \sum_{n'\in\C(n)} \tilde Q_Y(n';\beta) &= \Delta \tilde L_Y(n;\beta),\\
&= \Delta I_Y(n) - \frac{1}{\beta} \Delta I_X(n) \leq \Delta I_Y(n)  = \sum_{s \in \Nint(\T_{\W(n)} )}\Delta I_Y(s),
\end{align*}
as \(\Nint(\T_{\W(n)}) = \{n\}\) for all \(n \in \N_{\ell-1}(\T_\W)\).

\vspace{0.3cm}

\noindent 
Now assume the result holds for all \(n' \in \N_{k+1}(\T_\W)\) and consider any \(n \in \N_k(\T_\W)\) for \(k \in \{0,\ldots,\ell-2\}\).
Then,
\begin{align*}
\Delta \tilde L_Y(n;\beta) + \sum_{n'\in\C(n)} \tilde Q_Y(n';\beta) &= \Delta \tilde L_Y(n;\beta) + \sum_{n'\in\C(n) \cap \mathcal R} \tilde Q_Y(n';\beta),\\
&= \Delta \tilde L_Y(n;\beta) + \sum_{n'\in\C(n) \cap \mathcal R} \left( \Delta \tilde L_Y(n';\beta) + \sum_{n'' \in \C(n')} \tilde Q(n'';\beta) \right) 
\end{align*}
where \(\mathcal R = \left\{n'\in \C(n) : \tilde Q_Y(n';\beta) > 0 \right\}\) and note that \(\tilde Q_Y(n';\beta) = 0\) for \(n' \in \C(n) \setminus \mathcal R\).
Since \(n' \in \C(n) \subseteq \N_{k+1}(\T_\W)\), we have from the induction hypothesis that 
\begin{equation*}
\Delta \tilde L_Y(n';\beta) + \sum_{n'' \in \C(n')} \tilde Q_Y(n'';\beta) \leq \sum_{s \in \Nint(\T_{\W(n')})}\Delta I_Y(s),
\end{equation*}
leading to
\begin{equation*}
\Delta \tilde L_Y(n;\beta) + \sum_{n'\in\C(n)} \tilde Q_Y(n';\beta) \leq \Delta \tilde L_Y(n;\beta) + \sum_{n'\in\C(n) \cap \mathcal R} \sum_{s \in \Nint(\T_{\W(n')})} \Delta I_Y(s).
\end{equation*}
Now, 
\begin{align*}
\Delta \tilde L_Y(n;\beta) +& \sum_{n'\in\C(n) \cap \mathcal R} \sum_{s \in \Nint(\T_{\W(n')})} \Delta I_Y(s) = \Delta \tilde L_Y(n;\beta) + \sum_{s \in \underset{n'\in\C(n) \cap \mathcal R}{\bigcup} \Nint(\T_{\W(n')})} \Delta I_Y(s),\\
&\leq \Delta \tilde L_Y(n;\beta) + \sum_{s \in \underset{n'\in\C(n) \cap \mathcal R}{\bigcup} \Nint(\T_{\W(n')})} \Delta I_Y(s) + \sum_{s \in \underset{n'\in\C(n) \setminus \mathcal R}{\bigcup} \Nint(\T_{\W(n')})} \Delta I_Y(s), \\
&\leq \Delta I_Y(n)+ \sum_{s \in \underset{n'\in\C(n) \cap \mathcal R}{\bigcup} \Nint(\T_{\W(n')})} \Delta I_Y(s) + \sum_{s \in \underset{n'\in\C(n) \setminus \mathcal R}{\bigcup} \Nint(\T_{\W(n')})} \Delta I_Y(s),\\
&= \sum_{s \in \Nint(\T_{\W(n)})} \Delta I_Y(s),
\end{align*}
where the first inequity follows from the non-negativity of \(\Delta I_Y(\cdot)\) and the second from the fact that \(\Delta \tilde L_Y(n;\beta) \leq \Delta I_Y(n)\) for all \(n \in \Nint(\T_\W)\).
Thus, \(\Delta \tilde L_Y(n;\beta) + \sum_{n'\in\C(n)}\tilde Q_Y(n';\beta) \leq \sum_{s \in \Nint(\T_{\W(n)})} \Delta I_Y(s)\) for all \(n \in \Nint(\T_\W)\).

\vspace{0.3cm}

\noindent
Consequently, we note that 
\begin{align*}
\tilde Q_Y(n;\beta) &= \max\left\{\Delta \tilde L_Y(n;\beta) + \sum_{n' \in \C(n)} \tilde Q_Y(n';\beta),~0\right\},\\
&\leq \max\left\{\sum_{s \in \Nint(\T_{\W(n)})} \Delta I_Y(s), ~0\right\},\\
&= \sum_{s \in \Nint(\T_{\W(n)})} \Delta I_Y(s),
\end{align*}
since \(\sum_{s \in \Nint(\T_{\W(n)})} \Delta I_Y(s) \geq 0\).
Therefore, \(\tilde Q_Y(n;\beta) \leq \sum_{s \in \Nint(\T_{\W(n)})} \Delta I_Y(s)\) for all \(n \in \Nint(\T_\W)\).
\end{proof}

%
\begin{lemma} \label{thm:limitOfQFunction}
Let \(n \in \mathcal N_{\text{int}}(\T_\W)\). 
Then \(\underset{\beta \to \infty}{\lim} \tilde Q_Y(n;\beta) = \sum_{s \in \mathcal N_{\text{int}}(\T_{\W(n)})}\Delta I_Y(s)\).
\end{lemma}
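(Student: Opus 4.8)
The plan is to prove the equality by squeezing $\tilde Q_Y(n;\beta)$ between two quantities whose limits as $\beta\to\infty$ both equal $\sum_{s \in \Nint(\T_{\W(n)})}\Delta I_Y(s)$. All the ingredients are already in place: Fact \ref{fct:PlowerboundQ} gives the lower bound $\tilde P_Y(n;\beta) \leq \tilde Q_Y(n;\beta)$, and Lemma \ref{lem:QupperBound} gives the upper bound $\tilde Q_Y(n;\beta) \leq \sum_{s \in \Nint(\T_{\W(n)})}\Delta I_Y(s)$. The upper bound is already a constant independent of $\beta$, so its limit is itself. For the lower bound, Lemma \ref{lem:limitPandRelationToDeltaL} tells us exactly that $\lim_{\beta\to\infty}\tilde P_Y(n;\beta) = \sum_{s \in \Nint(\T_{\W(n)})}\Delta I_Y(s)$.

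Concretely, I would first fix $n \in \Nint(\T_\W)$ and write down the chain of inequalities valid for every $\beta > 0$:
\begin{equation*}
\tilde P_Y(n;\beta) \leq \tilde Q_Y(n;\beta) \leq \sum_{s \in \Nint(\T_{\W(n)})}\Delta I_Y(s).
\end{equation*}
Next I would observe that as $\beta \to \infty$ the left-hand quantity tends to $\sum_{s \in \Nint(\T_{\W(n)})}\Delta I_Y(s)$ by Lemma \ref{lem:limitPandRelationToDeltaL}, while the right-hand quantity is constant and equal to the same value. Since the existence of $\lim_{\beta\to\infty}\tilde Q_Y(n;\beta)$ might be in question a priori, I would phrase the conclusion using $\liminf$ and $\limsup$: the inequalities force $\liminf_{\beta\to\infty}\tilde Q_Y(n;\beta) \geq \sum_{s}\Delta I_Y(s)$ and $\limsup_{\beta\to\infty}\tilde Q_Y(n;\beta) \leq \sum_{s}\Delta I_Y(s)$, hence the limit exists and equals $\sum_{s \in \Nint(\T_{\W(n)})}\Delta I_Y(s)$. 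This is essentially a one-line application of the squeeze theorem once the three prior results are cited.

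There is really no main obstacle here — the lemma is a corollary of the machinery built up in Fact \ref{fct:PlowerboundQ} and Lemmas \ref{lem:limitPandRelationToDeltaL} and \ref{lem:QupperBound}, all of which are proved earlier in this appendix. The only point requiring a modicum of care is not to assume the limit exists before squeezing; using $\liminf$/$\limsup$ (or, equivalently, invoking the squeeze theorem directly in the form "if $a_\beta \le b_\beta \le c_\beta$ and $a_\beta, c_\beta \to L$ then $b_\beta \to L$") handles this cleanly. I would also note in passing that $\Nint(\T_{\W(n)})$ is a finite set, so all the sums involved are finite and the limit manipulations are unambiguous.
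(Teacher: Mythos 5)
Your proposal is correct and follows essentially the same route as the paper: squeeze $\tilde Q_Y(n;\beta)$ between $\tilde P_Y(n;\beta)$ (Fact \ref{fct:PlowerboundQ}) and the constant $\sum_{s \in \Nint(\T_{\W(n)})}\Delta I_Y(s)$ (Lemma \ref{lem:QupperBound}), then apply Lemma \ref{lem:limitPandRelationToDeltaL} to the lower bound. Your explicit use of $\liminf$/$\limsup$ to avoid presupposing existence of the limit is a small refinement of care over the paper's presentation, but the argument is the same.
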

%

\begin{proof}
From Fact \ref{fct:PlowerboundQ} and Lemma \ref{lem:QupperBound}, we have,
\begin{equation*}
\tilde P_Y(n;\beta) \leq \tilde Q_Y(n;\beta) \leq  \sum_{s \in \Nint(\T_{\W(n)})} \Delta I_Y(s).
\end{equation*}
Taking the limit as \(\beta \to \infty\) results in
\begin{equation*}
\lim_{\beta \to \infty} \tilde P_Y(n;\beta) \leq \lim_{\beta \to \infty} \tilde Q_Y(n;\beta) \leq \lim_{\beta \to \infty}\sum_{s \in \Nint(\T_{\W(n)})} \Delta I_Y(s),
\end{equation*}
where from Lemma \ref{lem:limitPandRelationToDeltaL} and the fact that \(\sum_{s \in \Nint(\T_{\W(n)})} \Delta I_Y(s)\) does not depend on \(\beta\), we obtain
\begin{equation*}
\sum_{s \in \Nint(\T_{\W(n)})} \Delta I_Y(s) \leq \lim_{\beta \to \infty} \tilde Q_Y(n;\beta) \leq \sum_{s \in \Nint(\T_{\W(n)})} \Delta I_Y(s).
\end{equation*}
Therefore, \(\underset{\beta \to \infty}{\lim} \tilde Q_Y(n;\beta) = \sum_{s \in \Nint(\T_{\W(n)})} \Delta I_Y(s)\).
\end{proof}

\noindent
We now prove Proposition \ref{thm:convQtreeSearch}.
\begin{proof}
Assume the Q-tree search algorithm returns the tree \(\T_\W\) as \(\beta \to \infty\).
This implies all nodes \(n \in \Nint(\T_\W)\) are expanded as \(\beta \to \infty\) and therefore \(\underset{\beta \to \infty}{\lim}\tilde Q_Y(n;\beta) > 0\).
From Lemma \ref{thm:limitOfQFunction},
\begin{equation*}
\lim_{\beta \to \infty} \tilde Q_Y(n;\beta) = \sum_{s \in \Nint(\T_{\W(n)})}\Delta I_Y(s),
\end{equation*}
it follows that \(\sum_{s \in \Nint(\T_{\W(n)})}\Delta I_Y(s) > 0\) for all \(n \in \Nint(\T_\W)\).
Observe that \(\N_{\ell-1}(\T_\W) \subseteq \Nint(\T_\W)\) and that for all \(n \in \N_{\ell - 1}(\T_\W)\), we have \(\Nint(\T_{\W(n)}) = \{ n\}\).
Therefore, for all \(n \in \N_{\ell - 1}(\T_\W)\),
\begin{equation*}
0 < \sum_{s \in \Nint(\T_{\W(n)})} \Delta I_Y(s) = \Delta I_Y(n).
\end{equation*}

\vspace{0.3cm}

\noindent
Now assume \(\Delta I_Y(n) > 0\) for all \(n \in \N_{\ell-1}(\T_\W)\).
Then, for any \(n \in \Nint(\T_\W)\), 
\begin{align*}
\sum_{s \in \Nint(\T_{\W(n)})} I_Y(s) &= \sum_{s \in \Nint(\T_{\W(n)}) \setminus \N_{\ell-1}(\T_\W)} I_Y(s) + \sum_{s \in \Nint(\T_{\W(n)}) \cap \N_{\ell-1}(\T_\W)} I_Y(s),\\
&\geq \sum_{s \in \Nint(\T_{\W(n)}) \cap \N_{\ell-1}(\T_\W)} I_Y(s) > 0,
\end{align*}
since \(\Nint(\T_{\W(n)}) \cap \N_{\ell-1}(\T_\W) \neq \emptyset\) for all \(n \in \Nint(\T_\W)\).
Therefore, \(\sum_{s \in \Nint(\T_{\W(n)})} I_Y(s) > 0\) for all \(n \in \Nint(\T_\W)\) and, from Lemma \ref{thm:limitOfQFunction},
\begin{equation*}
\lim_{\beta \to \infty} \tilde Q_Y(n;\beta) = \sum_{s \in \Nint(\T_{\W(n)})} I_Y(s)  > 0, ~~\forall n\in\Nint(\T_\W).
\end{equation*}
Since \(\underset{\beta \to \infty}{\lim}\tilde Q_Y(n;\beta) > 0\) for all \(n \in \Nint(\T_\W)\), all expandable nodes will be expanded by Q-tree search as \(\beta \to \infty\), resulting in the algorithm returning the tree \(\T_\W\).
\end{proof}
%
\section{Proof of Fact \ref{fact:vLeafsRelation}} \label{app:proofFctVleafsRelation}

\begin{proof}
The proof is given by induction.  
First, consider the case when \(n \in r^{-1}(\{0\}) = \mathcal N_{\ell} \left( \mathcal T_{\mathcal W(n)}\right)\).
This leads to 
\begin{equation*}
V^{\lambda}_{\varepsilon}(n) = \frac{1}{2^{d \cdot 0}} \sum_{n'\in \mathcal N_{\text{leaf}}\left( \mathcal T_{\mathcal W(n)} \right)}V^{\lambda}_{\varepsilon}(n') = V^{\lambda}_{\varepsilon}(n),
\end{equation*}
since \(n \in \mathcal N_{\text{leaf}}\left( \mathcal T_{\mathcal W} \right)\).

\vspace{0.3cm}

\noindent
Assume now that the induction hypothesis holds for \(n' \in r^{-1}(\{u\}) = \N_{\ell-u}(\T_\W)\) for some \(u \in \{0,\ldots,\ell-1\}\) and consider any \(n \in r^{-1}(\{u + 1\}) = \mathcal N_{\ell - u - 1}(\T_\W) \).
Then, from the definition of \(V^{\lambda}_{\varepsilon}\), 
\begin{equation*}
V^{\lambda}_{\varepsilon}(n) = \frac{1}{2^d} \sum_{n' \in \mathcal C(n)} V^{\lambda}_{\varepsilon}(n'),
\end{equation*}
and note that \(n' \in \mathcal C(n) \subseteq \mathcal N_{\ell - u}\left( \mathcal T_{\mathcal W} \right)\).
From the induction hypothesis, we have
\begin{equation}\label{eq:FactvLeafsProof_1}
V^{\lambda}_{\varepsilon}(n) = \frac{1}{2^{d(u+1)}} \sum_{n' \in \mathcal C(n)} \sum_{\hat n \in \mathcal N_{\text{leaf}}\left( \mathcal T_{\mathcal W(n')} \right) } V^{\lambda}_{\varepsilon}(\hat n).
\end{equation}
Relation \eqref{eq:FactvLeafsProof_1} is equivalent to
\begin{equation*}
V^{\lambda}_{\varepsilon}(n) = \frac{1}{2^{d(u+1)}}\sum_{n' \in \underset{\hat n \in \mathcal C(n)}{\bigcup}\mathcal N_{\text{leaf}}\left( \mathcal T_{\mathcal W(\hat n)} \right) } V^{\lambda}_{\varepsilon}(n')
= \frac{1}{2^{d r(n)}}\sum_{n' \in \mathcal N_{\text{leaf}}\left( \mathcal T_{\mathcal W(n)} \right) } V^{\lambda}_{\varepsilon}(n'),
\end{equation*}
concluding the proof.
\end{proof}
%
\section{Proof of Proposition \ref{prop:epsFeasibleAbsPath}} \label{app:ProofEpsFeasibleAbsPath}

\begin{proof}
The proof is given by contradiction.
Assume there exists \(\beta > 0\) such that \(\hat J^{\lambda}_{\varepsilon}(\hat \pi;\beta) \geq M_{\varepsilon}^{\lambda}\) for some \(\varepsilon\)-AP \(\hat \pi\).
From this assumption, the definition of \(\hat J^{\lambda}_{\varepsilon}\), and Fact \ref{fact:vLeafsRelation}, we have 
\begin{align*}
M^{\lambda}_{\varepsilon} \leq \sum_{z \in \hat \pi} 2^{d r(z)} V^{\lambda}_{\varepsilon}(z) & = \sum_{z \in  \hat \pi } \sum_{n'\in \mathcal N_{\text{leaf}}\left( \mathcal T_{\mathcal W(z)}\right) }  V^{\lambda}_{\varepsilon}(n'), \\
&= \sum_{n \in \underset{z \in \hat \pi}{\bigcup}\mathcal N_{\text{leaf}}\left( \mathcal T_{\mathcal W(z)}\right) } V^{\lambda}_{\varepsilon}(n).
\end{align*}
Since the path is \(\varepsilon\)-feasible, it follows that \(\underset{z \in \hat \pi}{\bigcup}\mathcal N_{\text{leaf}}\left( \mathcal T_{\mathcal W(z)}\right) \subseteq \mathcal P_{\varepsilon}\) and, along with the definition of \(V^{\lambda}_{\varepsilon}\) as well as the non-negativity of the cost \(c_{\varepsilon}^{\lambda}\), we obtain
\begin{align*}
\sum_{n \in \underset{z \in \hat \pi}{\bigcup} \mathcal N_{\text{leaf}}\left( \mathcal T_{\mathcal W(z)}\right) } V^{\lambda}_{\varepsilon}(n)  \leq \sum_{n \in \mathcal P_{\varepsilon}} c^{\lambda}_{\varepsilon}(n) &= \sum_{x \in \mathcal P_{\varepsilon}} \lambda_1 + \lambda_2 p(y = 1| x),\\
&\leq \sum_{x \in \mathcal P_{\varepsilon}} \lambda_1 + \lambda_2 \varepsilon, \\
& \leq \sum_{x \in \mathcal N_{\text{leaf}}\left( \mathcal T_{\mathcal W}\right) } \lambda_1 + \lambda_2 \varepsilon, \\
& = 2^{d\ell}\left(\lambda_1 + \lambda_2 \varepsilon \right) < 2^{d\ell}\left(\lambda_1 + \lambda_2 \varepsilon \right) + \gamma = M_{\varepsilon}^{\lambda}.
\end{align*}
The above result implies
\begin{equation*}
M_{\varepsilon}^{\lambda} \leq \sum_{z \in \hat \pi } 2^{d r(z)} V^{\lambda}_{\varepsilon}(z) < M_{\varepsilon}^{\lambda},
\end{equation*}
which is a contradiction. 

\vspace{0.3cm}

\noindent
Now assume \(J^{\lambda}_{\varepsilon}(\hat \pi;\beta) < M_{\varepsilon}^{\lambda}\) and define the sets
\begin{align*}
\mathcal A_{\hat \pi} &= \bigcup_{z \in \hat \pi}  \mathcal N_{\text{leaf}}\left( \mathcal T_{\mathcal W(z)}\right) \cap \mathcal P_{\varepsilon}, \\
\mathcal B_{\hat \pi} &=  \bigcup_{z \in \hat \pi}  \mathcal N_{\text{leaf}}\left( \mathcal T_{\mathcal W(z)}\right) \cap \mathcal P^{c}_{\varepsilon}.
\end{align*}
Then, by Fact \ref{fact:vLeafsRelation} we have
\begin{equation*}
\hat J^{\lambda}_{\varepsilon}\left(\hat \pi;\beta \right) = \sum_{z \in \mathcal A_{\hat \pi}} V^{\lambda}_{\varepsilon}(z) + \sum_{z \in \mathcal B_{\hat \pi}}V^{\lambda}_{\varepsilon}(z).
\end{equation*}
From the definition of \(V^{\lambda}_{\varepsilon}\), the above is equivalent to
\begin{equation*}
\hat J^{\lambda}_{\varepsilon}\left(\hat \pi;\beta \right) = \sum_{z \in \mathcal A_{\hat \pi}} V^{\lambda}_{\varepsilon}(z) + \vert \mathcal B_{\hat \pi} \rvert M^{\lambda}_{\varepsilon}.
\end{equation*}
Note that \(\mathcal A_{\hat \pi} \subseteq \mathcal P_{\varepsilon}\) and hence \(0 \leq \sum_{z \in \mathcal A_{\hat \pi}} V^{\lambda}_{\varepsilon}(z) \leq 2^{d\ell}(\lambda_1 + \lambda_2 \varepsilon)< M^{\lambda}_{\varepsilon}\).
Furthermore, observe \(\vert \mathcal B_{\hat \pi} \rvert M^{\lambda}_{\varepsilon} \geq M^{\lambda}_{\varepsilon}\) if \(\vert \mathcal B_{\hat \pi} \rvert \neq 0\) and is \(0\) otherwise.
Thus, if \(\hat J^{\lambda}_{\varepsilon}\left(\hat \pi;\beta \right) < M^{\lambda}_{\varepsilon}\), we have that 
\begin{equation*}
\hat J^{\lambda}_{\varepsilon}\left(\hat \pi;\beta \right) = \underbrace{ \sum_{z \in \mathcal A_{\hat \pi}} V^{\lambda}_{\varepsilon}(z)}_{< M^{\lambda}_{\varepsilon}} + \vert \mathcal B_{\hat \pi} \rvert M_{\varepsilon}^{\lambda} < M^{\lambda}_{\varepsilon},
\end{equation*}
which requires \(\vert \mathcal B_{\hat \pi} \rvert = 0\).
Therefore, if \(\hat J^{\lambda}_{\varepsilon}\left(\hat \pi;\beta \right) < M^{\lambda}_{\varepsilon}\) then \(\mathcal B_{\hat \pi} = \emptyset\).
Hence \(\underset{z \in \hat \pi}{\bigcup}\mathcal N_{\text{leaf}}\left( \mathcal T_{\mathcal W(z)}\right) \subseteq \mathcal P_{\varepsilon}\), thereby implying \(\hat \pi\) is an \(\varepsilon\)-AP.
\end{proof}
%
\section{Proof of Corollary \ref{cor:epsFeasibleFRP}} \label{app:ProofcorEpsFeasibleAbsPath}
\begin{proof}
The proof is identical to the proof of Proposition \ref{prop:epsFeasibleAbsPath}.
\end{proof}

%
\section{Proof of Proposition \ref{prop:vEpsAndObs}} \label{app:proofPropVepsAndObs}

\begin{proof}
Let \(n \in \mathcal N_{\text{int}}\left(\mathcal T_{\mathcal W} \right)\) and assume \(\mathcal N_{\text{leaf}}\left( \mathcal T_{\mathcal W(n)} \right) \cap \mathcal P^{c}_{\varepsilon} \neq \emptyset\).
Define the sets
\begin{align*}
\mathcal A_n &\triangleq \mathcal N_{\text{leaf}}\left( \mathcal T_{\mathcal W(n)}\right) \cap \mathcal P_\varepsilon, \\
\mathcal B_n &\triangleq \mathcal N_{\text{leaf}}\left( \mathcal T_{\mathcal W(n)}\right) \cap \mathcal P^{c}_\varepsilon,
\end{align*}
and note that, by assumption, \(\lvert \mathcal B_n \rvert \neq 0\).
Now, from Fact \ref{fact:vLeafsRelation}, we have that
\begin{align*}
V^{\lambda}_{\varepsilon}(n) &= \frac{1}{2^{d r(n)}}\sum_{n' \in \mathcal N_{\text{leaf}} \left( \mathcal T_{\mathcal W(n)} \right) } V^{\lambda}_{\varepsilon}(n'), \\
&= \frac{1}{2^{d r(n)}} \left[ \sum_{n' \in \mathcal A_n} V^{\lambda}_{\varepsilon}(n') +  \sum_{n' \in \mathcal B_n} V^{\lambda}_{\varepsilon}(n') \right],
\end{align*}
and since the function \(V^{\lambda}_{\varepsilon}\) is non-negative,
\begin{align*}
V^{\lambda}_\varepsilon(n) &= \frac{1}{2^{d r(n)}} \left[ \sum_{n' \in \mathcal A_n} V^{\lambda}_{\varepsilon}(n') +  \sum_{n' \in \mathcal B_n} V^{\lambda}_{\varepsilon}(n') \right], \\
&= \frac{1}{2^{d r(n)}} \left[ \sum_{n' \in \mathcal A_n} V^{\lambda}_{\varepsilon}(n') + \lvert \mathcal B_n \rvert M^{\lambda}_{\varepsilon} \right], \\ 
&\geq \frac{1}{2^{d r(n)}}\lvert \mathcal B_n \rvert M^{\lambda}_{\varepsilon}.
\end{align*}
Now, as \(\lvert \mathcal B_n \rvert \neq 0\), it follows that
\begin{equation*}
V^{\lambda}_\varepsilon(n) \geq \frac{1}{2^{d r(n)}}\lvert \mathcal B_n \rvert M^{\lambda}_{\varepsilon} \geq \frac{1}{2^{d r(n)}}M^{\lambda}_{\varepsilon},
\end{equation*}
and, hence, from the definition of \(M^{\lambda}_{\varepsilon}\), we obtain
\begin{equation} \label{eq:proofPropVeps_1}
V^{\lambda}_\varepsilon(n) \geq \frac{1}{2^{d r(n)}}M^{\lambda}_{\varepsilon} = 2^{d(\ell-r(n))}(\lambda_1 + \lambda_2 \varepsilon) + \frac{1}{2^{d r(n)}}\gamma > 2^{d(\ell-r(n))}(\lambda_1 + \lambda_2 \varepsilon) \geq \lambda_1 + \lambda_2 \varepsilon,
\end{equation}
which follows since \(\nicefrac{1}{2^{d r(n)}} \geq 1\), \(\gamma > 0\) and \(2^{d(\ell-r(n))} \geq 1\).
Relation \eqref{eq:proofPropVeps_1} implies \(V^{\lambda}_\varepsilon(n) > \lambda_1 + \lambda_2 \varepsilon\).

\vspace{0.3cm}

\noindent 
Now let \(n \in \mathcal N_{\text{int}}\left(\mathcal T_{\mathcal W} \right)\) and assume \(V^{\lambda}_\varepsilon(n) > \lambda_1 + \lambda_2 \varepsilon\).
Then, from the definition of \(V^{\lambda}_{\varepsilon}\), we have
\begin{equation*}
V^{\lambda}_{\varepsilon}(n) =  \frac{1}{2^{d r(n)}}\sum_{n' \in \mathcal A_n} V^{\lambda}_{\varepsilon}(n') + \frac{2^{d\ell}(\lambda_1 + \lambda_2\varepsilon) + \gamma}{2^{d r(n)}}\lvert \mathcal B_{n} \rvert,
\end{equation*}
and,
\begin{equation*}
0 \leq \frac{1}{2^{d r(n)}}\sum_{n' \in \mathcal A_n} V^{\lambda}_{\varepsilon}(n')  \leq \frac{1}{2^{d r(n)}}\sum_{n' \in \mathcal A_n} \lambda_1 + \lambda_2 \varepsilon = \frac{1}{2^{d r(n)}}(\lambda_1 + \varepsilon\lambda_2) \lvert \mathcal A_n \rvert \leq \lambda_1 + \varepsilon\lambda_2,
\end{equation*}
where the second inequality follows from \(c_{\varepsilon}^{\lambda}(n') \leq \lambda_1 + \lambda_2 \varepsilon\) for all \(n' \in \mathcal A_{n} \subseteq\mathcal P_{\varepsilon}\) and the third inequality follows from the fact that \(\lvert \mathcal A_n \rvert \leq 2^{d r(n)}\).
Consequently,
\begin{equation*}
0 \leq \frac{1}{2^{d r(n)}}\sum_{n' \in \mathcal A_n} V^{\lambda}_{\varepsilon}(n') \leq \lambda_1 + \varepsilon\lambda_2.
\end{equation*}
Furthermore, note that \(\frac{2^{d\ell}(\lambda_1 + \lambda_2 \varepsilon)+ \gamma}{2^{d r(n)}} > 0\) and 
\begin{equation*}
\frac{2^{d\ell}(\lambda_1 + \lambda_2 \varepsilon)+ \gamma}{2^{d r(n)}}\lvert \mathcal B_{n} \rvert \geq 0.
\end{equation*}
Therefore, \(\left( \nicefrac{(2^{d\ell}(\lambda_1 + \lambda_2\varepsilon) + \gamma)}{2^{d r(n)}} \right)\lvert \mathcal B_{n} \rvert > 0\) only if \(\lvert \mathcal B_n \rvert > 0\), and is \(0\) otherwise.
Then, if \(V^{\lambda}_{\varepsilon}(n) > \lambda_1 + \lambda_2 \varepsilon\) we have that
\begin{equation*}
V^{\lambda}_{\varepsilon}(n) = \underbrace{\frac{1}{2^{d r(n)}}\sum_{n' \in \mathcal A_n} V^{\lambda}_{\varepsilon}(n')}_{\leq \lambda_1 + \lambda_2 \varepsilon} + \underbrace{\frac{2^{d\ell}(\lambda_1 + \lambda_2 \varepsilon) + \gamma}{2^{d r(n)}}\lvert \mathcal B_{n} \rvert}_{\geq 0} > \lambda_1 + \lambda_2 \varepsilon,
\end{equation*}
which requires \(\lvert \mathcal B_n \rvert > 0\). 
Thus, if \(V^{\lambda}_{\varepsilon}(n)>\lambda_1 + \lambda_2 \varepsilon\) then \(\mathcal N_{\text{leaf}}\left( \mathcal T_{\mathcal W(n)} \right) \cap \mathcal P^{c}_{\varepsilon} \neq \emptyset\).
\end{proof}
\end{appendices}
\end{document}